\documentclass[11pt]{article}

\usepackage[margin=1in]{geometry}
\usepackage{natbib}
\usepackage[utf8]{inputenc} 
\usepackage[T1]{fontenc}    
\usepackage{hyperref}       
\hypersetup{
  hidelinks,
  pdftitle={Closed-Loop Knowledge Dynamics: An Operational Framework for Saturation and Escape},
  pdfauthor={Xuening Wu}
}
\usepackage{url}            
\usepackage{booktabs}       
\usepackage{amsfonts}       
\usepackage{nicefrac}       
\usepackage{microtype}      
\usepackage{xcolor}         
\usepackage{graphicx}
\graphicspath{{./}{revised_figures/}}

\usepackage{amsthm}
\usepackage{amsmath}
\usepackage{amssymb} 
\usepackage{authblk}

\newtheorem{definition}{Definition}

\newtheorem{theorem}{Theorem}

\newtheorem{proposition}{Proposition}

\newtheorem{corollary}{Corollary}

\newtheorem{assumption}{Assumption}

\newtheorem{remark}{Remark}

\title{Closed-Loop Knowledge Dynamics: An Operational Framework for Saturation and Escape}

\author{
Xuening Wu,$^{1}$ 
Shan Yu,$^{2}$ 
Shenqin Yin$^{3,\dagger}$\\
{\small
$^{1}$Pfizer, Shanghai, China\\
$^{2}$Independent Researcher, Hangzhou, China\\
$^{3}$Institute of Humanities and Social Science Data, Fudan University, Shanghai, China\\
$^{\dagger}$Corresponding author: \texttt{ysq@fudan.edu.cn}}}

\date{}

\begin{document}

\maketitle
\begin{abstract}

Feedback-driven loops support iterative improvement in large language models, reinforcement learning, and autonomous discovery, yet their gains often diminish under repeated internal feedback. We study why closed-loop knowledge systems saturate and what external information can move them beyond their current attractors.
We introduce a three-level operational framework in which knowledge states $x_t$ evolve through transition kernels $K_{\theta}$ indexed by a structural parameter $\theta$. The governing structure is defined as the observational equivalence class of $\theta$ induced by these kernels, while attractors and basins are properties of the fixed-$\theta$ dynamics. A structural intervention changes $\theta$ and produces a detectable kernel discrepancy on pre-specified probe states, making structural change falsifiable.
Using a Lyapunov drift condition, we show that stable internal dynamics approach bounded stability regions with exponentially attenuated transients and a noise-controlled residual floor. We characterize escape through a metric condition on intervention-induced attractor displacement and a baseline-relative KL lower bound for increasing escape probability. This analysis also explains why conditional mutual information alone cannot certify escape: it measures variation among intervention-conditioned updates rather than departure from the no-intervention law. Case studies in LLM code repair, sparse-reward reinforcement learning, and Bayesian optimization use matched continuation controls to illustrate how feedback strength and alignment affect quality-improving escape. Our contribution is an operational connection among stability tools, measurable intervention effects, and cross-domain diagnostics.
\end{abstract}

\section{Introduction}

Modern AI systems increasingly rely on closed feedback loops. Large language
models (LLMs) refine their outputs through self-reflection and revision,
reinforcement learning (RL) algorithms update policies from environmental
rewards, and autonomous discovery systems revise hypotheses based on
experimental evidence. Despite their different forms, these systems share a
common structure: a knowledge representation produces an output, receives
feedback, and updates itself repeatedly.

A striking empirical pattern appears across such systems. Improvement is often
rapid in early iterations, but marginal gains gradually diminish as the loop is
repeated. LLM self-refinement improves drafts for several rounds and then
stabilizes; RL policies approach stable value or policy regions; and
belief-guided discovery systems increasingly concentrate their search as
uncertainty is resolved. This suggests that saturation may not be merely an
artifact of a particular algorithm, prompt, or benchmark, but a broader
phenomenon of feedback-driven knowledge evolution.

This raises a basic question: what governs the long-term dynamics of
closed-loop knowledge systems, and under what conditions can a saturated system
move beyond its current attractor? Existing theories provide partial answers
within individual domains. Contraction mappings explain the convergence of
Bellman operators in RL; Bayesian optimization analyzes uncertainty-guided
exploration; dynamical systems theory provides tools for stability and
attractors; and information theory quantifies the value of new observations.
These theories provide the relevant mathematical ingredients, but they do not
by themselves supply a shared experimental interface for distinguishing three
different observations: movement of the current state, a persistent change in
the update law, and quality-improving escape from a pre-specified basin.

We develop an operational framework for \emph{closed-loop knowledge dynamics}.
The framework
separates three levels: observable knowledge representations $x_t$, transition
kernels $K_{\theta}$, and an explicit structural parameter $\theta$. A fixed
$\theta$ induces the internal-loop kernel whose attractors and basins are then
derived. Structural change is defined as an update of $\theta$ that produces a
detectable kernel discrepancy on pre-specified probes; a one-off state change
without such a discrepancy is not labeled structural.

The resulting picture is a saturation--escape principle. Stable internal
feedback drives knowledge representations toward epistemic attractors, where
marginal improvement becomes increasingly limited. Escaping such regimes
requires more than additional internal iteration: it requires external
input that changes the probability of leaving the old basin. We characterize
this through a metric sufficient condition for attractor relocation and a
baseline-relative information-divergence requirement for increasing escape
probability.

Our contributions are as follows:

\begin{itemize}
\item We formulate \emph{closed-loop knowledge dynamics} as a three-level
operational model separating observable representations, transition kernels, and an
explicit structural parameter. We define structural change through a
pre-specified kernel discrepancy, making the governing structure identifiable
up to observational equivalence and empirically falsifiable.

\item We turn classical Lyapunov drift and contraction-plus-noise conditions
into cross-system saturation diagnostics, including an exponential-plus-floor
signature and explicit residual stability regions.

\item We derive quantitative escape conditions for saturated systems. A metric
condition relates parameter-induced attractor relocation to one-step escape,
while a baseline-relative KL bound gives a necessary information-divergence
budget for increasing escape probability. We explicitly separate this quantity
from conditional mutual information.

\item We evaluate the resulting diagnostics in LLM code repair,
sparse-reward reinforcement learning, and Bayesian optimization using controls
that separate additional internal iteration from the effect of external
feedback.
\end{itemize}

The mathematical ingredients are deliberately classical. The methodological
contribution is their coupling into falsifiable hypotheses with matched
baseline distributions, pre-specified escape events, and explicit tests for
structural change. This positioning allows the framework to be evaluated by
whether it improves measurement and intervention design across systems, rather
than by whether it introduces a new Lyapunov or information inequality.

\section{Related Work}
\label{sec:related-work}

\paragraph{Dynamical-systems views of learning.}
Dynamical-systems tools provide a natural language for analyzing iterative
learning processes. Robbins--Monro stochastic approximation and its ODE method
\citep{robbins1951stochastic,kushner2003stochastic,borkar2008stochastic}
study convergence toward roots of expected vector fields, while
Foster--Lyapunov drift criteria characterize stochastic stability through
one-step contraction inequalities \citep{meyn2009markov}. Contraction mappings
also underlie classical RL guarantees, such as the $\gamma$-contraction of the
discounted Bellman operator \citep{bertsekas1996neuro}. These theories provide
powerful tools for analyzing a fixed transition rule, but they typically do
not distinguish the observable knowledge representation, the transition
operator, and the higher-level structure that determines which attractors and
basins the operator induces.

\paragraph{Recursive self-refinement and saturation.}
LLM self-refinement methods such as Self-Refine \citep{madaan2023selfrefine}
and Reflexion \citep{shinn2023reflexion} show that repeated critique and
revision can improve model outputs over multiple rounds. Fixed-point iteration
is often invoked informally to explain eventual stabilization, and recent work
reports convergence-like behavior in latent concepts and uncertainty during
self-correction \citep{liu2025moralselfcorrection}. However, these works do
not provide a common operational test that separates trajectory stabilization,
structural change, and quality-improving escape. We complement this literature
by connecting an exponential-plus-floor diagnostic under
contraction-plus-noise to a metric relocation condition and a
baseline-relative information-divergence threshold.

\paragraph{Closed-loop discovery, effective landscapes, and external information.}
Bayesian optimization and bandit methods study how evidence guides search
under uncertainty; GP-UCB balances posterior mean estimates with uncertainty
bonuses \citep{srinivas2010gaussian}. Recent closed-loop discovery frameworks,
such as GAMBLe, analyze generator--assessor--search systems and the effective
landscapes they induce \citep{gamble2026adrs}, while work on self-evolving
language models studies when internal self-play plateaus without learnable
information gain \citep{liu2026selfplay}. LLM-guided and multi-fidelity
Bayesian optimization methods such as LABO and LGBO further examine when
auxiliary or low-cost LLM signals accelerate search and when they should be
gated or incorporated conservatively \citep{labo2026,lgbo2026}. These works are
closely related to our view that external information helps only when it is
sufficiently reliable and informative, but our focus is on general saturation
and escape conditions for closed-loop knowledge dynamics rather than a specific
search-system decomposition or acquisition rule.

\section{Mathematical Framework}
\label{gen_inst}

\subsection{Closed-Loop Knowledge Dynamics}

We formalize closed-loop AI systems as processes in which a knowledge
representation repeatedly produces an output, receives feedback, and updates
itself. The central object is not a particular algorithm, but the evolution of
knowledge representations under feedback.

Let $\mathcal{X}$ denote a knowledge representation space. An element
$x_t\in\mathcal{X}$ may represent a draft, answer, policy, hypothesis, belief
state, or other internal representation of knowledge. A closed-loop knowledge
system consists of an output operator, a feedback operator, and an update
operator:
\[
\mathcal{K}
=
(\mathcal{X},\mathcal{Y},\mathcal{E},\Omega,G,H,\Phi),
\]
where $\mathcal{Y}$ is the output space, $\mathcal{E}$ is the feedback space,
$\Omega$ is the external information space, $G:\mathcal{X}\to\mathcal{Y}$ is
the output operator, $H:\mathcal{Y}\times\Omega\to\mathcal{E}$ is the feedback
operator, and $\Phi:\mathcal{X}\times\mathcal{E}\to\mathcal{X}$ is the update
operator.

The system evolves according to
\begin{equation}
\begin{aligned}
y_t &= G(x_t),\\
e_t &= H(y_t,\omega_t),\\
x_{t+1} &= \Phi(x_t,e_t),
\end{aligned}
\label{eq:fdks}
\end{equation}
where $\omega_t\in\Omega$ denotes external information, such as an environment,
verifier, user, experiment, tool output, or observation. The feedback signal
$e_t$ need not be a prediction error; depending on the system, it may be a
reward, critique, preference, verifier judgment, experimental measurement, or
observation-derived signal.

Equation~\eqref{eq:fdks} induces a transition operator
\begin{equation}
\mathcal{T}_{\omega}(x)
=
\Phi(x,H(G(x),\omega)).
\label{eq:transition}
\end{equation}
Thus, at the level of knowledge representations, the closed loop can be written
as the discrete-time dynamical system
\begin{equation}
x_{t+1}=\mathcal{T}_{\omega_t}(x_t).
\label{eq:state-dynamics}
\end{equation}

\subsection{Three Levels of Closed-Loop Knowledge Dynamics}

The induced transition operator $\mathcal{T}_{\omega}$ allows closed-loop AI
systems to be analyzed using dynamical-systems tools. To distinguish a
temporary state perturbation from a persistent change in the update law, we
make the third level explicit rather than defining it retrospectively from an
observed trajectory.

\paragraph{Level 1: Knowledge representation.}
The observable object is the knowledge representation $x_t\in\mathcal{X}$.
This may be a text draft in recursive self-refinement, a policy in
reinforcement learning, a hypothesis in autonomous discovery, or a belief state
in Bayesian search.

\paragraph{Level 2: Transition dynamics.}
Let $\Theta$ be a structural parameter space and allow the operators
$(G,H,\Phi)$, including their sampling randomness, to depend on
$\theta\in\Theta$. Conditional on $\theta$ and feedback content $\omega$, the
next state is generated
by a Markov kernel
\[
x_{t+1}\sim K_{\theta}^{\omega}(\cdot\mid x_t).
\]
The kernel includes deterministic transitions as Dirac measures. Internal
feedback has a reference law $\nu_{\theta}^{\mathrm{int}}(d\omega\mid x)$ and
therefore induces the internal-loop kernel
\[
K_{\theta}^{\mathrm{int}}(A\mid x)
=
\int K_{\theta}^{\omega}(A\mid x)
\nu_{\theta}^{\mathrm{int}}(d\omega\mid x).
\]
Attractors, invariant sets, and stability basins are derived properties of
$K_{\theta}^{\mathrm{int}}$; they are not included in the definition by fiat.

\paragraph{Level 3: Governing structure.}
The governing structure is represented by $\theta_t$. A structural input
$z_t\in\mathcal Z$ updates it according to
\begin{equation}
\theta_{t+1}=U(\theta_t,z_t),
\qquad U(\theta_t,\varnothing)=\theta_t.
\label{eq:structure-update}
\end{equation}
Examples include changing a verifier or controller, incorporating a persistent
demonstration update, modifying a search prior, or adding information to a
context that is retained in subsequent rounds. By contrast, feedback that
changes $x_{t+1}$ while leaving $\theta_{t+1}=\theta_t$ is a state-level
perturbation, not a governing-structure shift.

Different parameter values can induce indistinguishable dynamics. We therefore
define the governing structure as an observational equivalence class rather
than as a syntactic parameter value. Fix in advance a probe distribution
$\mu$ over states, a probe-feedback law $\nu^{\mathrm{probe}}$, and a metric
$d$ on $\mathcal X$. Assume the probe kernels have finite first moments under
$d$. Define the structural discrepancy
\begin{equation}
\delta_{\mu,\nu}(\theta,\theta')
:=
\mathbb E_{\substack{x\sim\mu\\
\omega\sim\nu^{\mathrm{probe}}(\cdot\mid x)}}
\left[
W_d\!\left(
K_{\theta}^{\omega}(\cdot\mid x),
K_{\theta'}^{\omega}(\cdot\mid x)
\right)
\right],
\label{eq:structural-discrepancy}
\end{equation}
where $W_d$ is the 1-Wasserstein distance induced by $d$. We write
$\theta\sim_{\mu,\nu}\theta'$ when
$\delta_{\mu,\nu}(\theta,\theta')=0$ and define
\[
\mathfrak G_{\mu,\nu}(\theta):=[\theta]_{\sim_{\mu,\nu}}.
\]
Thus $\mathfrak G_{\mu,\nu}$ is identifiable only up to conditional transition
behavior on the declared state--feedback probes, which is the strongest notion
available from black-box trajectories.

\paragraph{Operational view.}
Choose a smallest scientifically meaningful effect size $\varepsilon_G>0$
before collecting post-intervention trajectories. A claimed structural change
from $\theta$ to $\theta'$ is supported only when an estimator of
$\delta_{\mu,\nu}(\theta,\theta')$ has a confidence interval whose lower endpoint
exceeds $\varepsilon_G$. The claim is falsified, at the chosen resolution, when
an equivalence test places the upper endpoint below $\varepsilon_G$.
Importantly, a large value of $d(x_t,x_{t+1})$ is not by itself evidence of
structural change. The evidence must be a reproducible change in the
conditional transition law on pre-specified probes. In Sections~\ref{sec:saturation}
and~\ref{sec:escape}, fixed governing dynamics mean $\theta_t\equiv\theta$;
a structural intervention means both $\theta'=U(\theta,z)\ne\theta$ and
$\delta_{\mu,\nu}(\theta,\theta')>\varepsilon_G$.

\section{Saturation under Fixed Governing Dynamics}
\label{sec:saturation}

We first analyze the regime in which $\theta_t\equiv\theta$. In this regime, repeated
internal feedback does not change the attractor geometry of the system; it only
evolves the current knowledge representation under the induced transition
operator. We show that a general Lyapunov drift condition is sufficient for
\emph{saturation}: the expected Lyapunov value approaches a bounded stability
region, and marginal improvement becomes asymptotically limited by the residual
noise floor.

\subsection{A General Drift Condition for Saturation}

Let $\mathcal{K}=(\mathcal{X},\mathcal{Y},\mathcal{E},\Omega,G,H,\Phi)$ be a
closed-loop knowledge system with induced transition operator
\[
\mathcal{T}_{\omega}(x)=\Phi(x,H(G(x),\omega)).
\]
The knowledge representation evolves as
\[
x_{t+1}=\mathcal{T}_{\omega_t}(x_t).
\]
We now state a general sufficient condition under which such dynamics saturate.

\begin{theorem}[General saturation under Lyapunov drift]
\label{thm:general-stability}
Let $V:\mathcal{X}\to\mathbb{R}_{\ge 0}$ be a Lyapunov function. Suppose there
exist $\rho_t\in[0,1)$ and $\sigma_t\ge 0$ such that, for the sequence generated
by $x_{t+1}=\mathcal{T}_{\omega_t}(x_t)$,
\[
\mathbb{E}\!\left[V(x_{t+1})\mid x_t\right]
\le
\rho_t V(x_t)+\sigma_t
\qquad \text{for all } t .
\]
Then
\[
\mathbb{E}[V(x_t)]
\le
\left(\prod_{s=0}^{t-1}\rho_s\right)V(x_0)
+
\sum_{s=0}^{t-1}
\left(\prod_{r=s+1}^{t-1}\rho_r\right)\sigma_s .
\]
In particular, if $\rho_t\equiv\rho$ and $\sigma_t\equiv\sigma$ are constant,
then
\[
\mathbb{E}[V(x_t)]
\le
\rho^t V(x_0)+\frac{\sigma}{1-\rho}.
\]
Thus the expected Lyapunov value approaches a bounded stability region whose
size is controlled by the accumulated noise terms.
\end{theorem}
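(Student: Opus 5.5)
The plan is to take total expectations in the one-step drift inequality and then unroll the resulting scalar linear recursion by induction on $t$. Write $m_t:=\mathbb{E}[V(x_t)]$. Here $x_0$ is deterministic, or more generally we read $V(x_0)$ as $\mathbb{E}[V(x_0)]$. Applying the tower property to $\mathbb{E}[V(x_{t+1})\mid x_t]\le\rho_t V(x_t)+\sigma_t$ gives
\[
m_{t+1}\le \rho_t\, m_t+\sigma_t .
\]
This uses monotonicity and linearity of expectation, and it uses that $\rho_t,\sigma_t$ are deterministic constants rather than random quantities. Since $V\ge 0$, all expectations are well defined in $[0,\infty]$. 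If $m_t<\infty$, then $m_{t+1}<\infty$ as well, so the argument involves no $\infty-\infty$ issues. This integrability point is the only step that needs any care; I expect it to be minor. It is handled by induction starting from $m_0=V(x_0)<\infty$.

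Next I will prove the displayed bound by induction. For $t=0$ the product is empty (equal to $1$) and the sum is empty (equal to $0$), so the claim reads $m_0\le V(x_0)$. For the inductive step, assume the bound at time $t$. Substitute it into $m_{t+1}\le\rho_t m_t+\sigma_t$; this step uses $\rho_t\ge 0$ so that the inequality is preserved. Multiplying by $\rho_t$ extends each product $\prod_{r=s+1}^{t-1}\rho_r$ to $\prod_{r=s+1}^{t}\rho_r$, and likewise extends $\prod_{s=0}^{t-1}\rho_s$ to $\prod_{s=0}^{t}\rho_s$. The added $\sigma_t$ is exactly the new $s=t$ term, whose coefficient is the empty product $\prod_{r=t+1}^{t}\rho_r=1$. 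This yields the formula at $t+1$.

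For the constant case, the formula becomes
\[
m_t\le \rho^t V(x_0)+\sigma\sum_{s=0}^{t-1}\rho^{t-1-s}=\rho^tV(x_0)+\sigma\,\frac{1-\rho^t}{1-\rho}.
\]
Bounding $1-\rho^t\le 1$, using $\rho\in[0,1)$, gives the stated estimate $\rho^tV(x_0)+\sigma/(1-\rho)$.

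The final qualitative sentence then follows by letting $t\to\infty$. The transient term $\rho^tV(x_0)$ decays exponentially, so $\limsup_t m_t\le\sigma/(1-\rho)$. In the time-varying case the same conclusion holds under mild conditions, for instance $\sup_t\rho_t<1$ with $\sigma_t$ bounded; this is the bounded stability region controlled by the accumulated noise terms.
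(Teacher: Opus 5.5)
Your proposal is correct and follows essentially the same route as the paper: take total expectations via the tower property to obtain $m_{t+1}\le\rho_t m_t+\sigma_t$, unroll by induction, and bound the geometric sum by $\sigma/(1-\rho)$ in the constant case. The rest of your argument only makes explicit what the paper's one-line induction leaves implicit: the empty products, the use of $\rho_t\ge 0$, the integrability check, and the sufficient condition $\sup_t\rho_t<1$ with bounded $\sigma_t$ for the time-varying qualitative conclusion.
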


\begin{proof}
The result follows by induction. Taking expectations in the one-step drift
condition gives
\[
\mathbb{E}[V(x_{t+1})]\le \rho_t \mathbb{E}[V(x_t)]+\sigma_t .
\]
Unrolling this recursion yields the stated time-varying bound. When
$\rho_t\equiv\rho$ and $\sigma_t\equiv\sigma$, the accumulated noise term is
bounded by the geometric series
\[
\sum_{s=0}^{t-1}\rho^s\sigma
\le
\frac{\sigma}{1-\rho},
\]
which gives the constant-rate form.
\end{proof}

Theorem~\ref{thm:general-stability} does not require the transition operator to
be gradient-based, differentiable, or explicitly known. It only assumes a
one-step contraction-plus-noise inequality on a Lyapunov function. This makes
the result applicable to heterogeneous closed-loop systems, including
LLM-based refinement loops, Bayesian belief updates, reinforcement learning
operators, and other feedback-driven knowledge processes.

\begin{corollary}[Saturation via constant contraction]
\label{cor:constant-contraction}
Suppose there exists $x^\ast\in\mathcal{X}$, a metric $d$ on $\mathcal{X}$,
$\rho\in[0,1)$, and $\sigma\ge 0$ such that
\[
\mathbb{E}\!\left[d(x_{t+1},x^\ast)\mid x_t\right]
\le
\rho d(x_t,x^\ast)+\sigma .
\]
Then Theorem~\ref{thm:general-stability} applies with
$V(x)=d(x,x^\ast)$, and the system approaches the stability region
\[
B\!\left(x^\ast,\frac{\sigma}{1-\rho}\right).
\]
\end{corollary}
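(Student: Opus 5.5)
The plan is to instantiate Theorem~\ref{thm:general-stability} with the Lyapunov function $V(x)=d(x,x^\ast)$ and constant parameters $\rho_t\equiv\rho$, $\sigma_t\equiv\sigma$. First I will check that this $V$ is admissible. Since $d$ is a metric, $V\ge 0$ and $V$ maps $\mathcal X$ into $\mathbb R_{\ge 0}$. The hypothesis of the corollary is then literally the one-step drift inequality $\mathbb E[V(x_{t+1})\mid x_t]\le \rho V(x_t)+\sigma$, with $\rho\in[0,1)$ and $\sigma\ge 0$. Invoking the constant-rate form of the theorem gives
\[
\mathbb E\bigl[d(x_t,x^\ast)\bigr]\le \rho^t d(x_0,x^\ast)+\frac{\sigma}{1-\rho}.
\]
If $x_0$ is random rather than fixed, $d(x_0,x^\ast)$ is replaced by its expectation, assuming that expectation is finite.

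Second, I will make precise what ``approaches the stability region $B(x^\ast,\sigma/(1-\rho))$'' means, since this is the only part that needs interpretation. The natural reading is in expectation:
\[
\limsup_{t\to\infty}\mathbb E[d(x_t,x^\ast)]\le \frac{\sigma}{1-\rho},
\]
because $\rho^t\to 0$. Equivalently, the expected distance enters any enlargement $B(x^\ast,\sigma/(1-\rho)+\epsilon)$ after $t\ge \log(d(x_0,x^\ast)/\epsilon)/\log(1/\rho)$ steps; when $\rho=0$ or $x_0=x^\ast$, this holds at every $t\ge 1$ or $t\ge 0$ respectively.

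Third, if a statement about the state itself is wanted, I will add a probabilistic version via Markov's inequality:
\[
\Pr\!\left(d(x_t,x^\ast)>r\right)\le \frac{\rho^t d(x_0,x^\ast)+\sigma/(1-\rho)}{r}.
\]
For large $t$ this shows the trajectory lies within a constant multiple of the radius $\sigma/(1-\rho)$ with high probability.

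The main obstacle is not technical. It lies in stating honestly that the conclusion concerns the expected distance, or a high-probability enlargement, rather than almost-sure entry into the exact ball. I will present the expectation bound as the content of the corollary and note that pathwise containment would require extra assumptions, for example bounded noise.
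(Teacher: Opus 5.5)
Your proposal is correct and matches the paper's (implicit) argument: the corollary is just Theorem~\ref{thm:general-stability} in its constant-rate form with $V(x)=d(x,x^\ast)$, which is admissible because a metric is nonnegative, and whose drift hypothesis is exactly the one assumed. Your reading of ``approaches the stability region'' as $\limsup_{t\to\infty}\mathbb{E}[d(x_t,x^\ast)]\le \sigma/(1-\rho)$, and the Markov-inequality addendum, are sound clarifications of the same content rather than a different route.
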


\subsection{Instantiation I: Recursive Self-Refinement}
\label{sec:self-refinement}

We first instantiate the theory for recursive self-refinement. Here
$\mathcal{X}$ is a text space equipped with a metric $d$, such as normalized
edit distance or embedding distance. The knowledge representation $x_t$ is the
current draft or answer. The output operator is the identity map, the feedback
operator corresponds to self-reflection or critique generated internally by the
model, and the update operator is the LLM revision step. In the absence of
external information, the system evolves under an effectively fixed governing
structure induced by the model, prompt, and internal feedback rule.

Because decoding at nonzero temperature is stochastic, we write the refinement
step as
\[
x_{t+1}=\Phi(x_t,\omega_t),
\]
where $\omega_t$ denotes fresh sampling randomness at step $t$.

\begin{assumption}[Contraction toward a soft attractor]
\label{ass:contraction}
There exist $x^\ast\in\mathcal{X}$ and $\rho\in[0,1)$ such that the
deterministic component of the refinement operator satisfies
\[
d\!\left(\Phi_{\mathrm{det}}(x),x^\ast\right)
\le
\rho d(x,x^\ast)
\qquad \text{for all } x\in\mathcal{X}.
\]
\end{assumption}

\begin{assumption}[Bounded decoding noise]
\label{ass:noise}
For any $x\in\mathcal{X}$ and independent noise draws $\omega,\omega'$,
\[
d\!\left(\Phi(x,\omega),\Phi(x,\omega')\right)
\le
\sigma .
\]
At temperature $0$, this corresponds to $\sigma=0$. More generally, the noise
level may depend on the local peakedness of the model distribution; the
constant $\sigma$ assumption is used here for tractability and to obtain a
closed-form saturation bound.
\end{assumption}

\begin{proposition}[Exponential saturation under contraction and noise]
\label{prop:fix_point}
Assume that there exist $x^\star\in\mathcal{X}$, $\rho\in[0,1)$, and
$\sigma\ge 0$ such that
\[
\mathbb{E}\!\left[d(x_{t+1},x^\star)\mid x_t\right]
\le
\rho d(x_t,x^\star)+\sigma .
\]
Then
\[
\mathbb{E}[d(x_t,x^\star)]
\le
\rho^t d(x_0,x^\star)
+
\frac{\sigma(1-\rho^t)}{1-\rho}.
\]
In particular, the trajectory approaches the stability basin
\[
B\!\left(x^\star,\frac{\sigma}{1-\rho}\right).
\]
Moreover, the expected transition magnitude
\[
\Delta_t:=\mathbb{E}[d(x_t,x_{t+1})]
\]
satisfies
\[
\Delta_t
\le
(1+\rho)\rho^t d(x_0,x^\star)
+
\frac{2\sigma}{1-\rho}.
\]
Thus recursive self-refinement exhibits exponential decay in transition
magnitude up to a noise-controlled floor.
\end{proposition}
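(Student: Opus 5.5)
The first claim follows from Theorem~\ref{thm:general-stability} (equivalently Corollary~\ref{cor:constant-contraction}) applied with $V(x)=d(x,x^\star)$, constant $\rho_t\equiv\rho$ and $\sigma_t\equiv\sigma$. The only refinement needed is to keep the exact finite geometric sum rather than bounding it by $\sigma/(1-\rho)$. Taking total expectations in the drift inequality gives the scalar recursion $a_{t+1}\le \rho a_t+\sigma$ for $a_t:=\mathbb{E}[d(x_t,x^\star)]$. Unrolling it yields
\[
a_t\le \rho^t a_0+\sigma\sum_{s=0}^{t-1}\rho^s=\rho^t d(x_0,x^\star)+\frac{\sigma(1-\rho^t)}{1-\rho},
\]
which is the first display. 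Since $\rho^t\to 0$, the limsup of $a_t$ is at most $\sigma/(1-\rho)$, and this is the sense in which the trajectory ``approaches'' the ball $B(x^\star,\sigma/(1-\rho))$. I would state explicitly that this is convergence of the expected distance into that radius, not almost-sure confinement.

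For the transition magnitude I would use the triangle inequality through $x^\star$: $d(x_t,x_{t+1})\le d(x_t,x^\star)+d(x_{t+1},x^\star)$. Taking expectations gives $\Delta_t\le a_t+a_{t+1}$. Substituting the bound just proved for both terms gives
\[
\Delta_t\le (\rho^t+\rho^{t+1})d(x_0,x^\star)+\frac{\sigma\left[(1-\rho^t)+(1-\rho^{t+1})\right]}{1-\rho}.
\]
The first term is exactly $(1+\rho)\rho^t d(x_0,x^\star)$. The second is at most $2\sigma/(1-\rho)$ because $1-\rho^t\le 1$ and $1-\rho^{t+1}\le 1$. This gives the stated bound. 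Alternatively, one could bound $a_{t+1}\le\rho a_t+\sigma$ directly, but the symmetric substitution already produces the constants in the statement.

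There is no real obstacle; the only points needing care are technical. The expectations must be finite, which I would assume or note follows from the recursion once $d(x_0,x^\star)<\infty$. The conditional drift inequality must be integrated via the tower property, which requires the conditioning to be on $x_t$ with the noise $\omega_t$ fresh at each step, consistent with the setup. Finally, I would remark that the floor $2\sigma/(1-\rho)$ is loose, since it double counts the residual noise through the triangle inequality. It is nevertheless what yields the exponential-plus-floor signature claimed in the proposition.
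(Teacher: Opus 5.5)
Your proof is correct and is essentially the same as the paper's. Both take expectations in the drift inequality, unroll the scalar recursion keeping the exact finite geometric sum, bound $\Delta_t\le \mathbb{E}[d(x_t,x^\star)]+\mathbb{E}[d(x_{t+1},x^\star)]$ via the triangle inequality through $x^\star$, and substitute the first bound into both terms; your notes on finiteness of expectations and the looseness of the $2\sigma/(1-\rho)$ floor are sound extras.
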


\begin{proof}
Deferred to Appendix~\ref{app:proofs}.
\end{proof}


\begin{remark}[Consistency with observed relaxation dynamics]
\label{rem:relaxation}
The fitted exponential-relaxation model
\[
\Delta_t \approx Ae^{-kt}+c
\]
reported for recursive abstract refinement under default-temperature decoding
($A=0.0585$, $k=0.923$, $c=0.0070$, $R^2=0.990$) is consistent with the
exponential-plus-floor form predicted by
Proposition~\ref{prop:fix_point}. The fitted decay rate implies an effective
contraction rate
\[
\rho=e^{-k}\approx 0.397 .
\]
Since Proposition~\ref{prop:fix_point} gives the step-size floor
\[
\frac{2\sigma}{1-\rho},
\]
interpreting the empirical floor $c$ through this upper bound gives
\[
\sigma\approx \frac{c(1-\rho)}{2}\approx 0.0021 .
\]
At temperature $0$, Assumption~\ref{ass:noise} corresponds to
$\sigma\to 0$, predicting a vanishing noise floor and convergence toward an
exact fixed point. Proposition~\ref{prop:fix_point} also yields a stopping
criterion: once the exponentially decaying term is dominated by the fitted
floor, additional refinement is expected to be noise-limited rather than
contraction-limited.
\end{remark}

\begin{remark}[Scope of the contraction assumption]
\label{rem:scope-assumptions}
Assumption~\ref{ass:contraction} is a sufficient condition, not an independently
verified mechanistic claim. A single refinement trajectory reveals the
transition magnitudes $\Delta_t$, but does not by itself verify pairwise
contraction across distinct initial states. A direct test would refine multiple
initial drafts and measure whether pairwise distances between trajectories
contract at a rate consistent with the $\rho$ estimated from single-trajectory
relaxation.
\end{remark}

\subsection{Instantiation II: Belief-Guided Discovery}
\label{sec:bayesevolve}

We next consider belief-guided autonomous discovery. Here the knowledge
representation $x_t$ is a belief state, such as a posterior distribution over
hypothesis quality. The output operator proposes candidates according to the
current belief, the feedback operator returns experimental or evaluation
outcomes, and the update operator performs belief updating. This gives another
closed-loop knowledge process:
\[
\text{belief}
\;\rightarrow\;
\text{candidate}
\;\rightarrow\;
\text{evidence}
\;\rightarrow\;
\text{updated belief}.
\]

Unlike recursive self-refinement, belief-guided discovery need not satisfy a
global contraction condition on the posterior state. We therefore analyze a
more limited but directly observable stabilization mechanism: the decay of
exploration pressure as uncertainty is resolved.

Consider the acquisition rule
\[
a_t(h)=-\mu_t(h)+\beta_t\sigma_t(h),
\qquad
\beta_t=\beta_0\sqrt{\frac{n_0}{t}},
\]
where $\mu_t(h)$ and $\sigma_t(h)$ are the posterior mean and standard
deviation of hypothesis $h$, and $n_0$ is the number of initialization
evaluations.

\begin{assumption}[Bounded posterior uncertainty]
\label{ass:gp}
There exists $S<\infty$ such that
\[
0\le \sigma_t(h)\le S
\qquad
\text{for all } h,t .
\]
\end{assumption}

Define the average posterior uncertainty over the current candidate pool
$\mathcal{C}_t$ by
\[
V_t
:=
\mathbb{E}_{h\sim\mathcal{C}_t}[\sigma_t(h)],
\]
and define the exploration-pressure quantity
\[
W_t:=\beta_t V_t .
\]

\begin{proposition}[Vanishing exploration pressure]
\label{prop:vanishing_exploration}
Under Assumption~\ref{ass:gp},
\[
0\le W_t
\le
\beta_0 S\sqrt{\frac{n_0}{t}},
\]
and therefore
\[
W_t=O(t^{-1/2})\to 0 .
\]
Consequently, the uncertainty-driven component of the acquisition rule becomes
asymptotically negligible, and candidate selection becomes increasingly
dominated by the posterior mean term.
\end{proposition}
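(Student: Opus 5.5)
The bound follows directly from the definitions, with no dynamical argument needed. First, $W_t=\beta_t V_t$ is a product of two nonnegative quantities. For $t\ge 1$ we have $\beta_t=\beta_0\sqrt{n_0/t}\ge 0$, since $\beta_0,n_0\ge 0$ implicitly for an exploration bonus. I will state this nonnegativity of $\beta_0$ explicitly, and I will restrict to $t\ge 1$ because $\beta_t$ is undefined at $t=0$. Next, $V_t=\mathbb{E}_{h\sim\mathcal{C}_t}[\sigma_t(h)]$ is an average of quantities lying in $[0,S]$ by Assumption~\ref{ass:gp}. Monotonicity of expectation therefore gives $0\le V_t\le S$ for every $t$. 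Here I read "$h\sim\mathcal{C}_t$" as any probability distribution over the candidate pool, for example the uniform distribution on a finite pool. The bound holds uniformly over whatever pool the algorithm selects, so adaptivity of $\mathcal{C}_t$ to past data causes no difficulty.

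Multiplying, $0\le W_t=\beta_t V_t\le \beta_t S=\beta_0 S\sqrt{n_0/t}$. Since $\beta_0 S\sqrt{n_0}$ is a constant independent of $t$, this is $O(t^{-1/2})$ and tends to $0$. That gives the displayed inequality and the rate.

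For the "consequently" clause, I would make the informal statement precise as follows. For every $h$ in the pool, $|a_t(h)-(-\mu_t(h))|=\beta_t\sigma_t(h)\le \beta_0 S\sqrt{n_0/t}$, so the acquisition function converges uniformly to the negative posterior mean at rate $t^{-1/2}$. It follows that any maximizer $h_t$ of $a_t$ over $\mathcal{C}_t$ is a $2\beta_0 S\sqrt{n_0/t}$-approximate maximizer of $-\mu_t$. To see this, compare $a_t(h_t)\ge a_t(h)$ for an arbitrary $h$, and add and subtract the bonus terms on each side. This is the sense in which selection becomes dominated by the mean term.

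There is no real obstacle. The only point needing care is interpretive: "negligible" should be stated as this uniform, additive approximation. It should not be read as a claim that the selected candidate eventually coincides with the argmax of $-\mu_t$. That stronger claim would require a gap condition on $\mu_t$ that the statement does not assume.
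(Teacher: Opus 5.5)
Your proof is correct and follows the same route as the paper's: bound $V_t\in[0,S]$ as an average of values in $[0,S]$ under Assumption~\ref{ass:gp}, then multiply by $\beta_t=\beta_0\sqrt{n_0/t}$. Your added conditions ($\beta_0\ge 0$, $t\ge 1$) and your approximate-maximizer reading of the ``consequently'' clause go slightly beyond the paper, which leaves that clause informal; because the bonus $\beta_t\sigma_t(h)$ lies in $[0,\beta_t S]$, you can drop the factor $2$ and show that $h_t$ is a $\beta_0 S\sqrt{n_0/t}$-approximate maximizer of $-\mu_t$.
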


\begin{proof}
Since $0\le \sigma_t(h)\le S$ for all $h,t$, we have
\[
0\le
V_t
=
\mathbb{E}_{h\sim\mathcal{C}_t}[\sigma_t(h)]
\le
S .
\]
Multiplying by $\beta_t=\beta_0\sqrt{n_0/t}$ gives
\[
0\le
W_t
=
\beta_t V_t
\le
\beta_0 S\sqrt{\frac{n_0}{t}} .
\]
Thus $W_t=O(t^{-1/2})$ and $W_t\to 0$.
\end{proof}

\begin{remark}[Scope of Proposition~\ref{prop:vanishing_exploration}]
\label{rem:vanishing_perturbation}
Proposition~\ref{prop:vanishing_exploration} does not claim posterior
contraction, regret minimization, or convergence to a global optimum. It shows
only that the exploration-pressure term induced by the acquisition rule
vanishes at rate $O(t^{-1/2})$. Thus, this result should be interpreted as
characterizing the attenuation of one feedback-driven perturbation, not as a
complete convergence theorem for Bayesian optimization. A stronger result would
require a drift inequality directly on the posterior-uncertainty process, such
as
\[
\mathbb{E}[V_{t+1}\mid x_t]\le \rho_t V_t+\sigma_t,
\]
under additional assumptions on the candidate pool, kernel, observation
schedule, and posterior update.
\end{remark}

\begin{remark}[Consistency with productive concentration]
\label{rem:productive-concentration}
Proposition~\ref{prop:vanishing_exploration} predicts a transition from broad,
uncertainty-driven exploration to increasingly mean-driven selection as
$t$ grows. This is consistent with the empirically observed productive
concentration pattern in belief-guided discovery: candidate diversity is high
in early iterations and decreases in later iterations even as objective values
continue to improve. In this sense, belief-guided discovery illustrates a
stabilization mechanism distinct from contraction: the governing dynamics do
not necessarily contract globally, but one source of exploration-induced
variation becomes progressively attenuated.
\end{remark}

\section{Escape via External Information}
\label{sec:escape}

Section~\ref{sec:saturation} analyzed saturation under effectively fixed
governing dynamics. We now ask when a saturated closed-loop system can move
beyond its current attractor. In the parameterized view introduced in
Section~\ref{gen_inst}, escape and structural change are distinct hypotheses.
Escape means that a state leaves a pre-specified old basin. Structural change
means that an intervention updates $\theta$ and produces a probe-kernel
discrepancy larger than $\varepsilon_G$. A state can leave a basin because of
noise without a structural change, and a structural change need not produce
immediate escape.

We formalize this idea in two steps. First, conditional on a parameter update
that induces a new locally contractive kernel, we derive a metric sufficient
condition for one-step escape in expected distance. Second, we derive a
baseline-relative information-divergence condition for increasing the
probability of leaving the old basin. The second result does not identify
attractor distance with conditional mutual information.

\subsection{Why Internal Iteration Cannot Relocate an Attractor}

We first record a structural observation. If the governing structure remains
fixed, repeated internal iteration may move the state within the existing
basin, but it cannot by itself relocate the attractor that defines the basin.

\begin{corollary}[Fixed-parameter iteration cannot relocate the attractor]
\label{cor:necessity}
Suppose $\theta_t\equiv\theta$ and the internal kernel
$K_{\theta}^{\mathrm{int}}$ has a fixed attractor $x^\ast_{\theta}$. Internal
iteration can change $x_t$ but cannot change the attractor of the fixed kernel.
Relocating the attractor therefore requires a parameter update
$\theta'=U(\theta,z)$ for which
$K_{\theta'}^{\mathrm{int}}$ has a different attractor. Notice that leaving
$B(x^\ast_{\theta},c)$ does not logically require such an update: a sufficiently
large stochastic state perturbation may also leave the basin. What requires a
structural update is persistent relocation of the transition law or its
attractor, not a single escape event.
\end{corollary}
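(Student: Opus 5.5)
The argument is essentially definitional: I will make precise that ``the attractor'' is a functional of the kernel alone, and hence of $\theta$ alone, and not of the current state. First I fix what an attractor of $K_{\theta}^{\mathrm{int}}$ means operationally. The natural candidate, consistent with Corollary~\ref{cor:constant-contraction} and Proposition~\ref{prop:fix_point}, is a point $x^\ast_\theta$ such that the drift inequality
\[
\mathbb{E}\!\left[d(x_{t+1},x^\ast_\theta)\mid x_t\right]\le \rho\, d(x_t,x^\ast_\theta)+\sigma
\]
holds under $x_{t+1}\sim K_{\theta}^{\mathrm{int}}(\cdot\mid x_t)$ uniformly in $x_t$. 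More generally, any invariant set or stationary law determined by $K_{\theta}^{\mathrm{int}}$ would serve. The key observation is that this is a property of the map $x\mapsto K_{\theta}^{\mathrm{int}}(\cdot\mid x)$ over all of $\mathcal X$, and it does not involve any particular $x_t$.

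Next I use the Level~3 update rule~\eqref{eq:structure-update}. Under internal iteration no structural input is applied, so $z_t=\varnothing$ and $\theta_{t+1}=U(\theta_t,\varnothing)=\theta_t$. By induction, $\theta_t\equiv\theta$ for all $t$. Because the internal-loop kernel at each step is the same object $K_{\theta}^{\mathrm{int}}$, its attractor $x^\ast_\theta$ is the same at every step, even though $x_{t+1}\ne x_t$ in general. State updates therefore change the argument of the kernel, never the kernel itself. The contrapositive gives the second claim: if at some time the attractor differs from $x^\ast_\theta$, then the kernel must differ, so some step had $\theta_{t+1}\ne\theta_t$. That requires $z_t\ne\varnothing$, i.e.\ a parameter update $\theta'=U(\theta,z)$ whose internal kernel has a different attractor.

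For the closing remark I give an explicit example showing that escape from $B(x^\ast_\theta,c)$ is possible with $\theta$ fixed. Take a kernel satisfying the drift bound with $\sigma>0$, whose noise places positive mass on states at distance greater than $c$ from $x^\ast_\theta$. Then $\Pr\bigl(d(x_{t+1},x^\ast_\theta)>c\bigr)>0$, while the attractor stays $x^\ast_\theta$ by the previous step. This separates a single escape event from persistent relocation.

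The main obstacle is uniqueness and well-definedness of ``the attractor''. If $K_{\theta}^{\mathrm{int}}$ admits several attractors, the statement must be read as ``the attractor set is unchanged''. I would handle this by defining the attractor as the full collection of invariant sets or stationary laws of $K_{\theta}^{\mathrm{int}}$, which is still a function of $\theta$ only. One further subtlety is that $\nu^{\mathrm{int}}_\theta$ depends only on $(\theta,x)$ and not on the history, so the kernel is time-homogeneous given $\theta$. I would state this explicitly as part of the setup.
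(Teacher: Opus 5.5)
Your proposal is correct and follows the same reasoning the paper relies on. The paper gives no separate proof: it treats the corollary as immediate from the Level~2 stipulation that attractors and basins are derived properties of $K_{\theta}^{\mathrm{int}}$, so fixing $\theta$ fixes the kernel and hence its attractor, while $B(x^\ast_{\theta},c)$ is only an expectation-level stability region that noise can exit. Your handling of non-uniqueness (taking the whole attractor collection as a functional of the kernel) and your escape example make precise what the paper leaves informal; such a kernel exists, for example $\rho=0$ with a two-point noise law at distances $0$ and $L>pL=c$.
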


This corollary prevents a definitional shortcut in which every large state
change is labeled a governing-structure shift. Repeating a fixed kernel does
not relocate that kernel's attractor, but stochastic escape and structural
relocation remain empirically separable events.

\subsection{A Metric Escape Condition}

Suppose that at time $t_0$ the system has saturated inside the stability basin
of an attractor $x^\ast_{\theta}$:
\[
x_{t_0}\in B(x^\ast_{\theta},c),
\qquad
c=\frac{\sigma}{1-\rho}.
\]
A structural intervention $z_{t_0}$ produces
$\theta'=U(\theta,z_{t_0})$ and is first required to pass the operational test
$\delta_{\mu,\nu}(\theta,\theta')>\varepsilon_G$. We model the resulting local
kernel by a perturbed transition operator that contracts toward a new attractor
$x^\ast_{\theta'}$ at rate
$\rho'\in[0,1)$ with perturbation noise $\sigma_{\mathrm{ext}}\ge 0$.
Let
\[
\Delta := d(x^\ast_{\theta},x^\ast_{\theta'})
\]
denote the induced attractor shift.

\begin{proposition}[Metric escape threshold]
\label{prop:escape}
Under the perturbed contraction model above, if
\[
\Delta
>
\Delta^\ast
:=
\frac{c(1+\rho')+\sigma_{\mathrm{ext}}}{1-\rho'},
\]
then
\[
\mathbb{E}\!\left[d(x_{t_0+1},x^\ast_{\theta})\right] > c,
\]
so the expected distance of the next state exceeds the old stability radius.
This is an expectation statement; it does not assert almost-sure escape.
\end{proposition}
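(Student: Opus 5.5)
The plan is to combine the triangle inequality with the one-step contraction-plus-noise form of the perturbed kernel, in the same spirit as Corollary~\ref{cor:constant-contraction}. First I make the perturbed contraction model explicit in the form I will use:
\[
\mathbb{E}\!\left[d(x_{t_0+1},x^\ast_{\theta'})\mid x_{t_0}\right]\le \rho' d(x_{t_0},x^\ast_{\theta'})+\sigma_{\mathrm{ext}}.
\]
This is exactly the hypothesis of Proposition~\ref{prop:fix_point}, now with $(x^\ast_{\theta'},\rho',\sigma_{\mathrm{ext}})$ in place of $(x^\star,\rho,\sigma)$. I also treat the saturation condition $x_{t_0}\in B(x^\ast_{\theta},c)$ as a deterministic statement, $d(x_{t_0},x^\ast_\theta)\le c$. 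If it only holds in expectation, the same argument goes through after taking expectations, since every step is linear.

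The key steps run as follows.

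\textbf{Step 1.} Bound the distance of the current state to the new attractor using the triangle inequality:
\[
d(x_{t_0},x^\ast_{\theta'})\le d(x_{t_0},x^\ast_\theta)+\Delta\le c+\Delta.
\]
Combined with the contraction model, this gives
\[
\mathbb{E}[d(x_{t_0+1},x^\ast_{\theta'})]\le \rho'(c+\Delta)+\sigma_{\mathrm{ext}}.
\]

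\textbf{Step 2.} Apply the reverse triangle inequality around the old attractor:
\[
d(x_{t_0+1},x^\ast_\theta)\ge \Delta-d(x_{t_0+1},x^\ast_{\theta'}).
\]
Taking expectations yields
\[
\mathbb{E}[d(x_{t_0+1},x^\ast_\theta)]\ge \Delta-\rho'(c+\Delta)-\sigma_{\mathrm{ext}}=(1-\rho')\Delta-\rho' c-\sigma_{\mathrm{ext}}.
\]

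\textbf{Step 3.} Require this lower bound to exceed $c$. That is the condition $(1-\rho')\Delta>c(1+\rho')+\sigma_{\mathrm{ext}}$, which is exactly $\Delta>\Delta^\ast$ because $1-\rho'>0$. The expectation statement then follows. No almost-sure claim is made, because Step 2 controls only the mean.

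The only real obstacle is interpretive rather than computational. It lies in fixing precisely what the "perturbed contraction model" asserts: that the contraction inequality toward $x^\ast_{\theta'}$ holds at the specific state $x_{t_0}$, including states at distance up to $c+\Delta$ from the new attractor. "Local" contraction has to be read as covering this region, and I would state that reading explicitly as part of the model. Once it is granted, the argument above is two applications of the triangle inequality, linearity of expectation, and a rearrangement.
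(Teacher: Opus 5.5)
Your proposal is correct and matches the paper's proof step for step. You bound $d(x_{t_0},x^\ast_{\theta'})\le c+\Delta$ by the triangle inequality, apply one step of the perturbed contraction toward $x^\ast_{\theta'}$, use the reverse triangle inequality around $x^\ast_{\theta}$, and rearrange using $1-\rho'>0$; your remark that the contraction must hold at states up to distance $c+\Delta$ from the new attractor makes explicit an assumption the paper's proof uses only implicitly.
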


\begin{proof}
Deferred to Appendix~\ref{app:proofs}.
\end{proof}

Proposition~\ref{prop:escape} states that escape requires more than novelty:
under the stated local model, the displayed inequality is a sufficient
condition for expected-distance escape. The induced attractor shift must exceed
a threshold determined by the radius of
the old basin, the contraction strength of the new local dynamics, and the
noise introduced by the external input. It is not a necessary condition, and a
state displacement without a verified kernel discrepancy is not evidence that
$\theta$ changed.

\subsection{Noise-Adjusted Displacement}

External information can also increase uncertainty or noise. Under the
perturbed dynamics, the post-intervention stability radius is
\[
c'
=
\frac{\sigma+\sigma_{\mathrm{ext}}}{1-\rho'} ,
\]
centered at the new attractor $x^\ast_{\theta'}$. To separate displacement
from the accompanying change in local spread, define the following geometric
diagnostic.

\begin{definition}[Noise-adjusted displacement margin]
\label{def:noise-adjusted-margin}
The noise-adjusted displacement margin is
\[
M_{\mathrm{shift}}
:=
\Delta-(c'-c).
\]
It is positive exactly when the attractor displacement exceeds the increase in
the modeled stability radius, that is, when $\Delta>c'-c$.
\end{definition}

\begin{remark}
The sign of $M_{\mathrm{shift}}$ is a geometric comparison, not a utility
guarantee. In particular, $M_{\mathrm{shift}}>0$ does not imply that the new
attractor has higher task quality, and $M_{\mathrm{shift}}\leq 0$ does not imply
that the intervention is harmful. A claim of beneficial escape additionally
requires a pre-specified task-quality functional $Q$ and evidence that the
intervention improves it, as in the quality-aligned event defined below and the
experimental metric $E_{\mathrm{eff}}$.
\end{remark}

\subsection{An Information-Theoretic Escape Threshold}

The metric result treats an intervention-induced transition law as given. We
now ask how different that law must be from the no-intervention law in order to
increase the probability of leaving the old basin. This baseline-relative
question is not answered by conditional mutual information alone.

\begin{assumption}[Bounded representation space]
\label{ass:diameter}
The knowledge representation space $\mathcal X$ has finite diameter
\[
D:=\sup_{x,x'\in\mathcal X} d(x,x') < \infty .
\]
\end{assumption}

Let $P_z(\cdot\mid x_t)$ denote the conditional law of the next knowledge
representation under a structural intervention $z$. Let
$P_0(\cdot\mid x_t)=K_{\theta}^{\mathrm{int}}(\cdot\mid x_t)$ be the
no-intervention continuation law from the same state. Define the old basin and
the one-step escape event by
\[
B_0:=B(x^\ast_{\theta},c),
\qquad
E_0:=\mathcal X\setminus B_0,
\]
and let
\[
p_z(x_t):=P_z(E_0\mid x_t),
\qquad
p_0(x_t):=P_0(E_0\mid x_t).
\]
The event can be replaced by a pre-specified \emph{useful escape} event, such
as leaving $B_0$ while exceeding a fixed quality threshold; the same result
then applies.

\begin{proposition}[Baseline-relative information threshold]
\label{prop:info-shift}
For every state $x_t$ and intervention $z$,
\[
|p_z(x_t)-p_0(x_t)|
\le
\operatorname{TV}\!\left(P_z(\cdot\mid x_t),P_0(\cdot\mid x_t)\right)
\le
\sqrt{\frac{1}{2}
\operatorname{KL}\!\left(P_z(\cdot\mid x_t)\,\|\,
P_0(\cdot\mid x_t)\right)}.
\]
Consequently, increasing one-step escape probability by at least
$\eta>0$ requires
\begin{equation}
\operatorname{KL}\!\left(P_z(\cdot\mid x_t)\,\|\,
P_0(\cdot\mid x_t)\right)
\ge 2\eta^2.
\label{eq:kl-escape-threshold}
\end{equation}
If absolute continuity fails, the KL divergence is infinite and the inequality
holds trivially.
\end{proposition}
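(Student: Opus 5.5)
The argument is a direct chaining of two standard inequalities, applied pointwise in the conditioning state $x_t$. Fix $x_t$ and write $P:=P_z(\cdot\mid x_t)$ and $Q:=P_0(\cdot\mid x_t)$, both probability measures on $\mathcal X$ with its Borel $\sigma$-algebra. We use the variational definition $\operatorname{TV}(P,Q)=\sup_{A}|P(A)-Q(A)|$ over measurable $A$. The set $B_0=B(x^\ast_\theta,c)$ is open or closed in the metric topology depending on convention, so it is measurable in either case. Hence $E_0=\mathcal X\setminus B_0$ is a measurable event.

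\textbf{First inequality.} This is immediate: $|p_z(x_t)-p_0(x_t)|=|P(E_0)-Q(E_0)|$, which is bounded by the supremum defining $\operatorname{TV}(P,Q)$. The same step works verbatim for any pre-specified measurable ``useful escape'' event, which justifies the remark after the definitions.

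\textbf{Second inequality.} This is Pinsker's inequality, $\operatorname{TV}(P,Q)\le\sqrt{\tfrac12\operatorname{KL}(P\|Q)}$ with natural logarithms. I would cite it as classical rather than reprove it. If a self-contained argument is wanted, the standard route has two steps:
\begin{enumerate}
\item Reduce to two-point distributions via the data-processing inequality for KL, applied to the indicator map $\mathbf 1_{A}$ with $A=\{dP/dQ\ge 1\}$. This attains the TV supremum when $P\ll Q$.
\item Verify the binary inequality $p\log\frac pq+(1-p)\log\frac{1-p}{1-q}\ge 2(p-q)^2$ by fixing $p$ and differentiating in $q$.
\end{enumerate}
If $P\not\ll Q$, then $\operatorname{KL}(P\|Q)=+\infty$ by convention and the bound holds trivially, matching the final sentence of the statement.

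\textbf{Threshold.} If the intervention raises escape probability by at least $\eta>0$, i.e.\ $p_z(x_t)-p_0(x_t)\ge\eta$, then the chain gives $\eta\le\sqrt{\tfrac12\operatorname{KL}(P\|Q)}$. Squaring, which is valid since both sides are nonnegative, yields $\operatorname{KL}(P\|Q)\ge 2\eta^2$, which is \eqref{eq:kl-escape-threshold}.

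\textbf{Obstacle and scope.} The only step with genuine content is Pinsker's inequality, and since it is classical I expect no real obstacle. The remaining care is bookkeeping: confirming measurability of $E_0$, fixing the logarithm base (with base-2 logarithms the constant changes by a factor of $\ln 2$), and handling the non-absolutely-continuous case. Assumption~\ref{ass:diameter} is not needed for this proposition. I would note that explicitly to avoid suggesting the bound depends on $D$.
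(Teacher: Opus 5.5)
Your proposal is correct and matches the paper's proof: bound $|p_z-p_0|$ by total variation using the indicator of $E_0$, apply Pinsker's inequality, and rearrange under $p_z(x_t)-p_0(x_t)\ge\eta$. Your added details are accurate and consistent with the paper's Discussion, namely the measurability of $E_0$, the natural-log convention, the non-absolutely-continuous case, and the observation that Assumption~\ref{ass:diameter} is not needed.
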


\begin{proof}
For the indicator $\mathbf 1_{E_0}$, the difference of expectations under two
laws is bounded by their total variation distance. Pinsker's inequality gives
the second inequality. Rearranging when
$p_z(x_t)-p_0(x_t)\ge\eta$ proves
Equation~\eqref{eq:kl-escape-threshold}.
\end{proof}

The preceding threshold is operational: both escape probabilities refer to
the same pre-specified event and differ only in whether the intervention is
present. The KL term is a property of the response distributions, not a claim
about the number of tokens, demonstrations, or observations supplied.

\begin{corollary}[Link from attractor relocation to information divergence]
\label{cor:info-escape}
Assume the setting of Proposition~\ref{prop:escape} and
Assumption~\ref{ass:diameter}, with $D>c$. Define
\[
L(\Delta)
:=
(1-\rho')\Delta-\rho'c-\sigma_{\mathrm{ext}},
\qquad
q(\Delta)
:=
\left[\frac{L(\Delta)-c}{D-c}\right]_+.
\]
Whenever the assumptions are jointly feasible, $q(\Delta)\le 1$, and the
intervention law satisfies
\[
p_z(x_{t_0})\ge q(\Delta).
\]
Therefore, if $q(\Delta)>p_0(x_{t_0})$, then necessarily
\[
\operatorname{KL}\!\left(P_z(\cdot\mid x_{t_0})\,\|\,
P_0(\cdot\mid x_{t_0})\right)
\ge
2\left(q(\Delta)-p_0(x_{t_0})\right)^2.
\]
\end{corollary}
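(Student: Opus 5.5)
The argument combines a lower bound on the expected distance from the old attractor under the intervention law with a reverse-Markov (Paley--Zygmund-type) inequality on the bounded space, and then applies Proposition~\ref{prop:info-shift}.

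\textbf{Step 1: lower bound on the expected distance.} Write $Y:=d(x_{t_0+1},x^\ast_{\theta})$ for the next state drawn from $P_z(\cdot\mid x_{t_0})$. I will redo the computation behind Proposition~\ref{prop:escape} in its sharper intermediate form. By the triangle inequality,
\[
Y\ge \Delta-d(x_{t_0+1},x^\ast_{\theta'}).
\]
The perturbed contraction model gives
\[
\mathbb E[d(x_{t_0+1},x^\ast_{\theta'})]\le \rho' d(x_{t_0},x^\ast_{\theta'})+\sigma_{\mathrm{ext}}.
\]
Since $x_{t_0}\in B(x^\ast_\theta,c)$, we have $d(x_{t_0},x^\ast_{\theta'})\le \Delta+c$. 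Combining these,
\[
\mathbb E[Y]\ge (1-\rho')\Delta-\rho'c-\sigma_{\mathrm{ext}}=L(\Delta).
\]
This is exactly the quantity in the statement. Its comparison with $c$ reproduces the threshold $\Delta^\ast$ of Proposition~\ref{prop:escape}, which is a useful consistency check.

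\textbf{Step 2: converting the expectation bound into a probability bound.} Assumption~\ref{ass:diameter} gives $0\le Y\le D$. Split $\mathbb E[Y]$ on the escape event $E_0=\{Y\ge c\}$, noting that $B_0$ is the open ball so that $E_0$ is its complement:
\[
\mathbb E[Y]\le c\,(1-p_z)+D\,p_z = c+(D-c)\,p_z .
\]
With $D>c$, this rearranges to
\[
p_z(x_{t_0})\ge \frac{L(\Delta)-c}{D-c}.
\]
Because $p_z\ge 0$, we may take the positive part, giving $p_z\ge q(\Delta)$.

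\textbf{Step 3: feasibility.} The claim $q(\Delta)\le 1$ follows from the same chain, since $L(\Delta)\le \mathbb E[Y]\le D$. This is the sense of "whenever the assumptions are jointly feasible": if $L(\Delta)>D$, then no law on $\mathcal X$ can satisfy both the contraction model and the diameter bound.

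\textbf{Step 4: the KL bound.} If $q(\Delta)>p_0(x_{t_0})$, set $\eta:=q(\Delta)-p_0>0$. Step 2 gives $p_z-p_0\ge\eta$, so Proposition~\ref{prop:info-shift} yields
\[
\operatorname{KL}\!\left(P_z(\cdot\mid x_{t_0})\,\|\,P_0(\cdot\mid x_{t_0})\right)\ge 2\eta^2 ,
\]
which is the stated inequality.

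\textbf{Main obstacle.} The hardest part is bookkeeping in Step 1. Proposition~\ref{prop:escape} as stated only asserts $\mathbb E[Y]>c$, so the proof must appeal to its underlying estimate $\mathbb E[Y]\ge L(\Delta)$. It must also be explicit about the conventions that the contraction model applies to the realized starting state $x_{t_0}$ (conditionally on $x_{t_0}$) and that $E_0$ is defined via the open ball. Once those conventions are fixed, the reverse-Markov step and the final appeal to Proposition~\ref{prop:info-shift} are routine.
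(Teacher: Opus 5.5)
Your proposal is correct and follows the paper's proof essentially step for step. The paper also takes the intermediate estimate $\mathbb E_{P_z}[d(x_{t_0+1},x^\ast_{\theta})\mid x_{t_0}]\ge L(\Delta)$ from the proof of Proposition~\ref{prop:escape}, bounds the same expectation above by $c+(D-c)\,p_z(x_{t_0})$ by splitting on $B_0$, and concludes with Proposition~\ref{prop:info-shift}; your Step~3 argument for $q(\Delta)\le 1$ (via $L(\Delta)\le \mathbb E[Y]\le D$) is a small detail the paper leaves implicit.
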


\begin{proof}
The proof of Proposition~\ref{prop:escape} yields
\[
\mathbb E_{P_z}
[d(x_{t_0+1},x^\ast_{\theta})\mid x_{t_0}]
\ge L(\Delta).
\]
Because the distance is at most $c$ inside $B_0$ and at most $D$ outside,
\[
\mathbb E_{P_z}[d(x_{t_0+1},x^\ast_{\theta})\mid x_{t_0}]
\le c+(D-c)p_z(x_{t_0}).
\]
Combining the inequalities gives $p_z(x_{t_0})\ge q(\Delta)$. The KL bound
then follows from Proposition~\ref{prop:info-shift}.
\end{proof}

\begin{proposition}[Mutual information and baseline shift measure different effects]
\label{prop:mi-decomposition}
Let $Z$ be a random external intervention, let $P_z$ be the corresponding
next-state law at a fixed $x_t$, and define the mixture
$\bar P=\mathbb E_Z[P_Z]$. Assume
$\mathbb E_Z\operatorname{KL}(P_Z\|P_0)<\infty$. Then
\begin{align}
\mathbb E_Z\!\left[
\operatorname{KL}(P_Z\|P_0)
\right]
&=
I(x_{t+1};Z\mid x_t)
+
\operatorname{KL}(\bar P\|P_0).
\label{eq:mi-baseline-decomposition}
\end{align}
\end{proposition}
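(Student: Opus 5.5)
This is the standard "compensation identity" (golden formula) for KL divergence. The plan is to reduce it to the chain rule for log-likelihood ratios through the mixture $\bar P$.

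\textbf{Step 1 (setup and absolute continuity).} Fix $x_t$. Since $\mathbb E_Z\operatorname{KL}(P_Z\|P_0)<\infty$, we have $P_z\ll P_0$ for $Z$-almost every $z$. Hence $\bar P=\mathbb E_Z[P_Z]\ll P_0$, and $P_z\ll\bar P$ for almost every $z$. The second claim holds because $\bar P(A)=0$ forces $P_z(A)=0$ for almost every $z$; to avoid null-set issues, work with densities $p_z=dP_z/d\lambda$ against the dominating measure $\lambda=P_0$. Measurability of $z\mapsto P_z$ is taken as part of the model: $P_Z$ is a Markov kernel from the intervention space to $\mathcal X$. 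Then the joint law of $(Z,x_{t+1})$ given $x_t$ is $P_Z(dz)\,P_z(dx)$, and its $x_{t+1}$-marginal is $\bar P$.

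\textbf{Step 2 (pointwise log decomposition).} Let $\bar p=\mathbb E_Z[p_Z]$ be the density of $\bar P$. For almost every pair with $p_z(x)>0$,
\[
\log\frac{p_z(x)}{1}=\log\frac{p_z(x)}{\bar p(x)}+\log\frac{\bar p(x)}{1},
\]
where the density of $P_0$ with respect to $\lambda=P_0$ is $1$. Integrate against $P_z(dx)$ and then against the law of $Z$. The first term gives $\mathbb E_Z\operatorname{KL}(P_Z\|\bar P)$, which is the standard definition of the conditional mutual information $I(x_{t+1};Z\mid x_t)$. The second term, by Fubini/Tonelli, equals $\int \bar p\log\bar p\,d\lambda=\operatorname{KL}(\bar P\|P_0)$.

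\textbf{Step 3 (main obstacle: justifying the splitting of integrals).} The only delicate point is that the two pieces might be of the form $\infty-\infty$, or that exchanging expectation in $Z$ with the $x$-integral might not be justified for signed integrands. I would handle this by noting that both pieces are nonnegative KL divergences once integrated. Concretely, I would use the convexity bound $\operatorname{KL}(\bar P\|P_0)\le\mathbb E_Z\operatorname{KL}(P_Z\|P_0)<\infty$, which follows from joint convexity of KL, or from Jensen applied to $u\mapsto u\log u$ on $\bar p=\mathbb E_Z p_Z$. This shows the second term is finite, so the first term equals the finite difference.

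To make the exchange rigorous, I would write $\log(p_z/\bar p)=\log p_z-\log\bar p$ and handle the negative parts via the bound $u\log u\ge u-1$. This ensures each integrand has an integrable negative part, so Tonelli applies to the positive and negative parts separately. Alternatively, one can cite the chain rule for relative entropy applied to the joint laws $\mathcal L(Z)\otimes P_Z$ versus $\mathcal L(Z)\otimes P_0$. This gives $\mathbb E_Z\operatorname{KL}(P_Z\|P_0)$ directly, and the same joint KL decomposes through the marginal $\bar P$ as $\operatorname{KL}(\bar P\|P_0)$ plus the conditional divergence $I(x_{t+1};Z\mid x_t)$. This chain-rule route is the cleanest, and I would present it as the main proof, with the density computation of Step 2 as the explicit verification.
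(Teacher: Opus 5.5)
Your proposal is correct and follows the paper's argument: the paper's proof is your Step~2, adding and subtracting $\log d\bar P$ so that one piece averages to $\mathbb E_Z\operatorname{KL}(P_Z\|\bar P)=I(x_{t+1};Z\mid x_t)$ and the other to $\operatorname{KL}(\bar P\|P_0)$. Your Step~3 (finiteness of $\operatorname{KL}(\bar P\|P_0)$ by convexity, so no $\infty-\infty$ arises when the integral is split) and your chain-rule repackaging supply the absolute-continuity and integrability justification that the paper's one-line proof leaves implicit.
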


\begin{proof}
Add and subtract $\log d\bar P$ inside
$\mathbb E_Z\operatorname{KL}(P_Z\|P_0)$. The first resulting term is
$\mathbb E_Z\operatorname{KL}(P_Z\|\bar P)=I(x_{t+1};Z\mid x_t)$; averaging
the second term over $Z$ gives $\operatorname{KL}(\bar P\|P_0)$.
\end{proof}

Equation~\eqref{eq:mi-baseline-decomposition} separates two notions that should
not be conflated. Conditional mutual information measures how strongly the
next-state law varies with the realized intervention around the intervention
mixture. The second term measures how far the average intervention regime moves
from the no-intervention baseline. A deterministic targeted intervention can
have $I(x_{t+1};Z\mid x_t)=0$ while still producing escape, because its law may
differ substantially from $P_0$. Conversely, large mutual information can
encode intervention-specific variation with little average movement toward a
useful basin. Claims about escape must therefore use the baseline-relative KL
term (and a quality-aligned event when benefit is intended), rather than mutual
information alone.

\section{Experiments}

\subsection{Experiment 1: LLM Code Repair}
\label{sec:exp-coding}

\paragraph{Experimental setup.}
We study whether external feedback can move a self-refining coding agent out of
a stable but incorrect solution basin. We use two real historical bugs from
different repositories: a unit-rollover bug in
\texttt{python-humanize}'s \texttt{naturalsize}, and an escaped-trailing-space
bug in \texttt{python-pathspec}'s Git wildmatch normalization. For each task, we
run five independent Phase~1 trajectories. The model recursively revises its
own implementation for ten rounds without access to the hidden boundary tests,
yielding a locally saturated checkpoint.

\paragraph{Scope and task selection.}
The two reported tasks were selected through exploratory screening rather than
a preregistered confirmatory protocol. Screening sought cases in which Phase~1
produced an executable but hidden-test-incomplete solution and a stable late
trajectory; it also inspected whether targeted Phase~2 feedback exposed an
informative contrast. Because the latter criterion uses intervention outcomes,
the reported task-level effects are descriptive case-study results and should
not be interpreted as an unbiased estimate over a population of code-repair
tasks. All screened candidates and their exclusion reasons are retained in the
experiment database. For a future confirmatory extension, we pre-specify
eligibility using Phase~1 information only: an executable solution, failure on
at least one deterministic hidden boundary test, and no hidden-test-pass-rate
improvement over the final three self-refinement rounds. Phase~2 outcomes will
not be used for inclusion or exclusion in that extension.

Starting from each checkpoint, Phase~2 compares baseline continuation, generic
feedback, error-only feedback, and mismatched feedback. For the
\texttt{pathspec} task, we additionally evaluate diagnostic feedback that
identifies the relevant root cause. Each trajectory is weighted equally in the
reported aggregate, preventing trajectories with more decoding samples from
dominating the result.

Let \(Q(x)\) denote hidden-test pass rate. For feedback condition \(\omega\), we
measure its displacement from baseline continuations and its associated quality
gain as
\[
E_{\mathrm{shift}}(\omega)
=
\frac{1}{K}\sum_{i=1}^{K}
\min_{j\leq K}
d\!\left(x_{\omega}^{(i)},x_{\mathrm{base}}^{(j)}\right),
\qquad
\Delta Q(\omega)
=
\bar Q_{\omega}-\bar Q_{\mathrm{base}}.
\]
We then define effective escape as
\[
E_{\mathrm{eff}}(\omega)
=
E_{\mathrm{shift}}(\omega)\max\{\Delta Q(\omega),0\},
\]
which credits behavioral displacement only when it is accompanied by improved
hidden-test performance.

\begin{figure}[t]
\centering
\includegraphics[width=0.49\linewidth]
{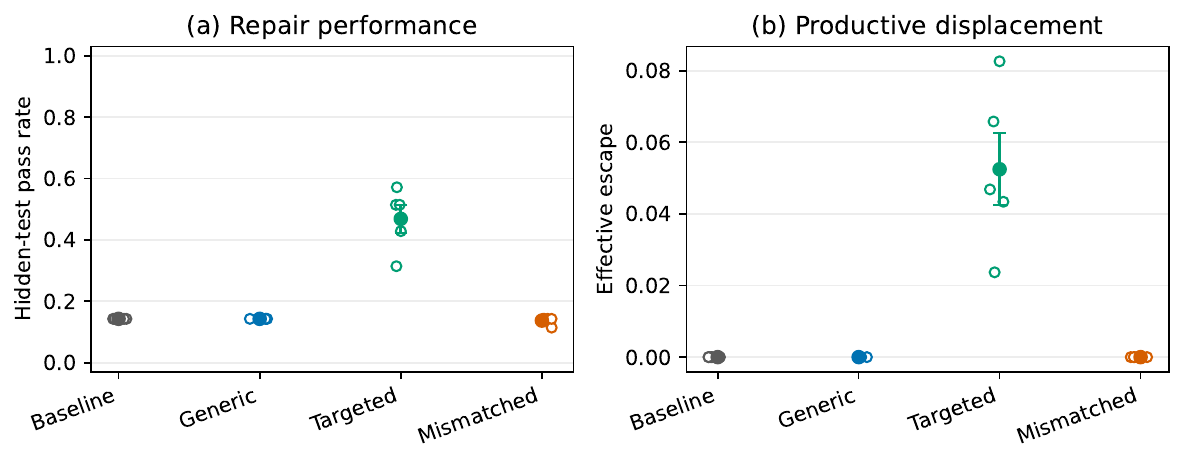}
\hfill
\includegraphics[width=0.49\linewidth]
{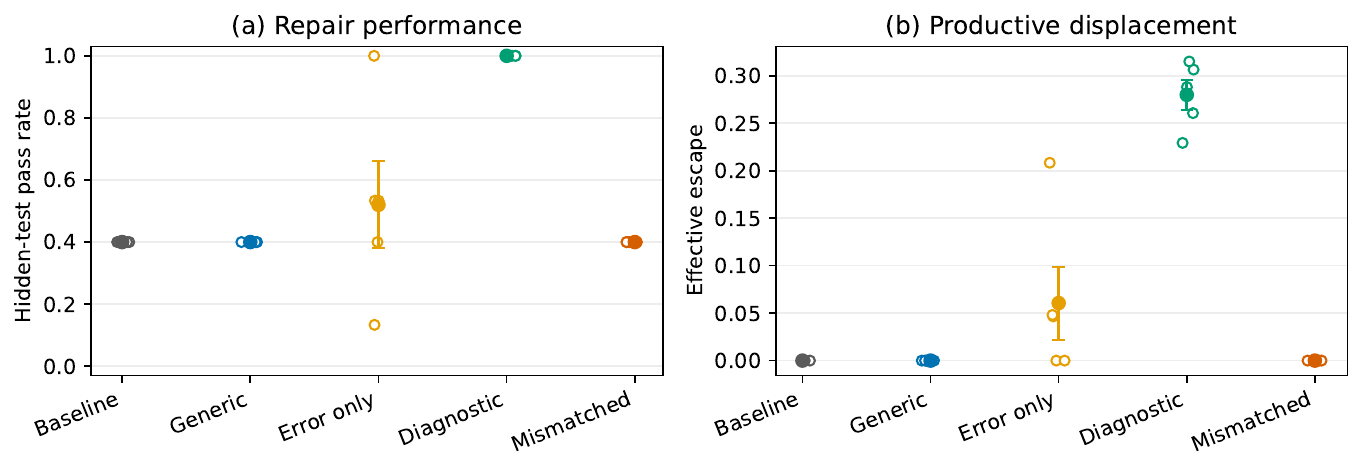}
\caption{
\textbf{Feedback-induced escape in LLM code repair.}
Results are computed over five independent Phase~1 trajectories per task;
points denote trajectories and filled markers show mean \(\pm\) SEM.
Left: on \texttt{naturalsize}, targeted error feedback improves repair
performance, whereas generic and mismatched feedback do not.
Right: on \texttt{pathspec}, error-only feedback is unreliable across
trajectories, while root-cause diagnostic feedback achieves successful escape
in all five trajectories. Generic feedback can produce substantial code
displacement without positive effective escape.
}
\label{fig:coding-escape}
\end{figure}

\begin{table}[t]
\centering
\caption{
Code-repair results over independent Phase~1 trajectories. Values are
trajectory-level mean \(\pm\) SEM. \(\Delta Q\) is measured relative to the
corresponding task's baseline continuation. A dash denotes a condition not
evaluated for that task.
}
\label{tab:code-repair-escape}
\begin{tabular}{llrrrr}
\toprule
Task & Condition & Traj. & Pass rate & \(\Delta Q\) &
\(E_{\mathrm{eff}}\) \\
\midrule
\texttt{naturalsize}
 & Baseline   & 5 & \(0.143 \pm 0.000\) & \(0.000 \pm 0.000\) &
 \(0.000 \pm 0.000\) \\
 & Generic    & 5 & \(0.143 \pm 0.000\) & \(0.000 \pm 0.000\) &
 \(0.000 \pm 0.000\) \\
 & Error-only & 5 & \(0.469 \pm 0.045\) & \(0.326 \pm 0.045\) &
 \(0.052 \pm 0.010\) \\
 & Mismatched & 5 & \(0.137 \pm 0.006\) & \(-0.006 \pm 0.006\) &
 \(0.000 \pm 0.000\) \\
\midrule
\texttt{pathspec}
 & Baseline   & 5 & \(0.400 \pm 0.000\) & \(0.000 \pm 0.000\) &
 \(0.000 \pm 0.000\) \\
 & Generic    & 5 & \(0.400 \pm 0.000\) & \(0.000 \pm 0.000\) &
 \(0.000 \pm 0.000\) \\
 & Error-only & 5 & \(0.520 \pm 0.140\) & \(0.120 \pm 0.140\) &
 \(0.061 \pm 0.038\) \\
 & Diagnostic & 5 & \(1.000 \pm 0.000\) & \(0.600 \pm 0.000\) &
 \(0.280 \pm 0.016\) \\
 & Mismatched & 5 & \(0.400 \pm 0.000\) & \(0.000 \pm 0.000\) &
 \(0.000 \pm 0.000\) \\
\bottomrule
\end{tabular}
\end{table}

\paragraph{Results.}
Both tasks exhibit stable but incorrect Phase~1 checkpoints. On
\texttt{naturalsize}, baseline and generic continuation remain at a pass rate
of \(0.143\), while targeted error feedback raises performance to
\(0.469 \pm 0.045\). The paired improvement is
\(\Delta Q=0.326 \pm 0.045\), with a trajectory-bootstrap \(95\%\) confidence
interval of \([0.240,0.394]\). Generic feedback induces substantial code
displacement but no quality improvement, and therefore has zero effective
escape.

The cross-repository \texttt{pathspec} task reveals a sharper dependence on
information content. Baseline, generic, and mismatched feedback all remain at
\(0.400\) across every trajectory. Error-only feedback is unstable, ranging
from unsuccessful or regressive revisions to complete repair; its mean gain is
only \(0.120 \pm 0.140\), and its bootstrap confidence interval
\([-0.107,0.387]\) includes zero. In contrast, root-cause diagnostic feedback
reaches a pass rate of \(1.000\) in all five trajectories, yielding
\(\Delta Q=0.600\) and \(E_{\mathrm{eff}}=0.280 \pm 0.016\).

These results refine the escape hypothesis: task relevance alone does not
guarantee reliable escape. Feedback must provide information aligned with the
task-specific failure boundary. Generic perturbations move the generated code
without improving correctness, and shallow error reports may succeed
sporadically; sufficiently diagnostic feedback instead produces consistent,
quality-aligned state-space escape. These experiments do not by themselves
establish a governing-structure shift, which would additionally require the
probe-kernel test in Equation~\eqref{eq:structural-discrepancy}.

\paragraph{Reproducibility.}
All Phase~1 trajectories, Phase~2 samples, hidden-test outcomes, token usage,
estimated costs, and task-screening decisions are stored in SQLite. The
screening records preserve each candidate task and its inclusion or exclusion
reason; no claim of preregistration is made for the present exploratory task
selection.

\subsection{Experiment 2: Feedback-Driven Escape in Reinforcement Learning}
\label{sec:exp-minigrid}

We next examine whether task-relevant feedback can move a saturated policy into
a successful behavioral regime, and whether this relocation persists after
standard reinforcement learning resumes. We use
\texttt{MiniGrid-DoorKey-8x8-v0}, where the agent must complete the compositional
sequence
\[
\text{pick up key}
\;\rightarrow\;
\text{open door}
\;\rightarrow\;
\text{reach goal}.
\]
All results are reported over five paired seeds
\(\{7,17,42,101,137\}\).

\paragraph{Saturated checkpoint.}
For each seed, we train PPO using the original sparse reward and evaluate the
policy every \(10{,}000\) environment steps. We declare saturation when the
success-rate range over five consecutive evaluations is below \(0.05\).
All five policies satisfy this criterion at \(50{,}000\) steps while retaining
zero task success. Every intervention branch is initialized from its
corresponding saturated checkpoint.

\paragraph{Feedback and continuation protocols.}
Targeted feedback consists of state--action examples generated by a
deterministic expert with access to the key, door, and goal locations. The
expert demonstrates the task-specific action sequence required for completion.
We consider feedback budgets
\[
m\in\{0,1000,10000,20000\}.
\]
Behavior cloning (BC) updates the policy on these examples without reward
shaping or additional environment interaction.

We compare three protocols. \emph{PPO-only} continues training directly from
the saturated checkpoint for another \(100{,}000\) environment steps.
\emph{BC-only} evaluates the policy immediately after feedback.
\emph{BC+PPO} first applies BC and then resumes PPO for the same
\(100{,}000\)-step budget using only the original sparse reward. PPO-only and
BC+PPO use the same learning rate, evaluation schedule, continuation budget,
and freshly initialized optimizer; they differ only in whether targeted
feedback is received before continuation.

Policies are evaluated every \(10{,}000\) continuation steps over 100
deterministic episodes from a common held-out evaluation stream. We measure
task success and the two principal subgoals:
\[
Q_{\mathrm{succ}}
=
\Pr(\text{reach goal}),\qquad
Q_{\mathrm{key}}
=
\Pr(\text{pick up key}),\qquad
Q_{\mathrm{door}}
=
\Pr(\text{open door}).
\]
Values are reported as mean \(\pm\) SEM over the five paired seeds. To
characterize post-feedback relaxation, we additionally report the mean success
over the final three evaluations (Tail-3) and the step-normalized area under the
continuation curve (AUC).

\begin{table}[t]
\centering
\caption{
Closed-loop escape and relaxation in
\texttt{MiniGrid-DoorKey-8x8-v0}. Immediate performance is measured directly
after BC and before resumed PPO. PPO-only begins from the unmodified saturated
checkpoint. Tail-3 averages the final three evaluations, and AUC is normalized
by the \(100{,}000\)-step continuation horizon. Values are mean \(\pm\) SEM
over five paired seeds.
}
\label{tab:minigrid-closed-loop}
\begin{tabular}{lrrrr}
\toprule
Protocol &
Immediate &
Final &
Tail-3 &
AUC \\
\midrule
PPO-only, \(m=0\)
& \(0.000\pm0.000\)
& \(0.000\pm0.000\)
& \(0.000\pm0.000\)
& \(0.000\pm0.000\) \\
BC+PPO, \(m=1000\)
& \(0.000\pm0.000\)
& \(0.000\pm0.000\)
& \(0.000\pm0.000\)
& \(0.000\pm0.000\) \\
BC+PPO, \(m=10000\)
& \(0.160\pm0.036\)
& \(0.212\pm0.038\)
& \(0.212\pm0.024\)
& \(0.205\pm0.016\) \\
BC+PPO, \(m=20000\)
& \(0.700\pm0.031\)
& \(0.658\pm0.022\)
& \(0.669\pm0.011\)
& \(0.681\pm0.016\) \\
\bottomrule
\end{tabular}
\end{table}

\begin{figure}[t]
\centering
\includegraphics[width=\linewidth]{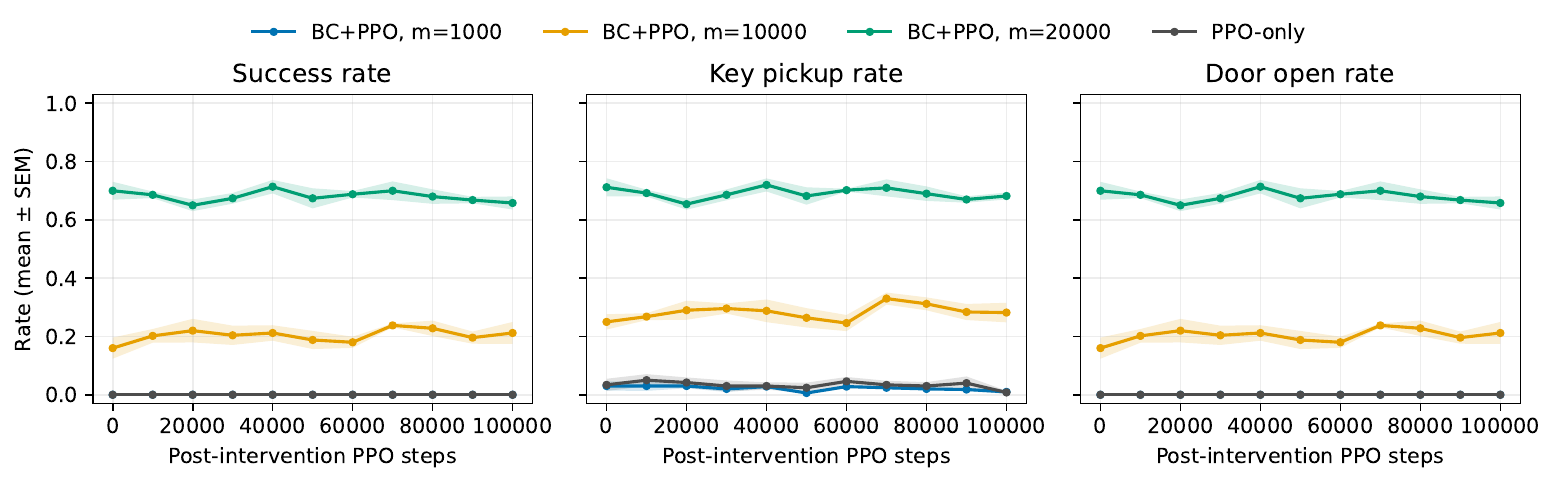}
\caption{
Post-feedback relaxation under the original sparse reward. The panels show task
success, key pickup, and door opening during \(100{,}000\) PPO continuation
steps. Curves denote mean \(\pm\) SEM over five paired seeds. PPO-only
continuation and weak feedback remain in the saturated failure regime.
Intermediate feedback relocates the policy to a moderate-success region,
whereas strong feedback produces a high-success region that persists throughout
resumed PPO.
}
\label{fig:minigrid-closed-loop}
\end{figure}

\begin{figure}[t]
\centering
\includegraphics[width=0.78\linewidth]
{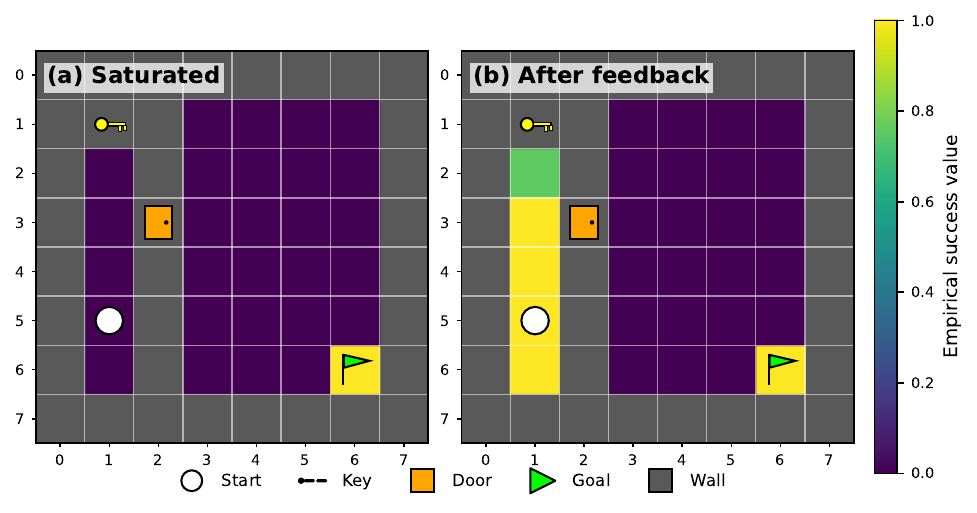}
\caption{
Representative empirical success landscape for seed 42 before and immediately
after \(m=20000\) targeted feedback. At each traversable cell, empirical success
is estimated from deterministic rollouts averaged over four initial
orientations. This quantity is a rollout-based completion probability, not the
learned PPO critic. Gray cells denote walls. Strong feedback expands the set of
states from which the policy can complete the task sequence.
}
\label{fig:minigrid-value-plane}
\end{figure}

\paragraph{Results.}
Additional optimization alone does not escape the saturated regime.
PPO-only continuation remains at zero success for every seed despite receiving
another \(100{,}000\) environment steps. Weak feedback is likewise
insufficient: with \(m=1000\), both immediate and post-continuation success
remain zero. Thus, reopening the optimizer cannot compensate for feedback that
fails to establish a useful behavioral hypothesis.

At \(m=10000\), BC immediately raises success to
\(0.160\pm0.036\). After resumed PPO, final success reaches
\(0.212\pm0.038\), with Tail-3 \(0.212\pm0.024\) and AUC
\(0.205\pm0.016\). Final key-pickup and door-open rates are
\(0.282\pm0.034\) and \(0.212\pm0.038\), respectively. Intermediate feedback
therefore establishes a nonzero-success region that remains accessible during
subsequent sparse-reward learning, although the seed-level change from
immediate to final performance is variable.

At \(m=20000\), immediate success reaches \(0.700\pm0.031\).
The policy remains in a high-success regime throughout continuation, with final
success \(0.658\pm0.022\), Tail-3 \(0.669\pm0.011\), and AUC
\(0.681\pm0.016\). Final key pickup and door opening reach
\(0.682\pm0.012\) and \(0.658\pm0.022\), respectively. Door opening and task
success are nearly identical, indicating that once the key--door sequence is
completed, reaching the goal is rarely the remaining bottleneck.

The strongest-feedback branch does not improve monotonically under resumed PPO:
its final success is slightly below its immediate BC performance. We therefore
interpret the result as \emph{persistent basin relocation}, rather than as
evidence that post-feedback optimization must monotonically improve the policy.
The matched-budget controls separate reopening the learning loop from the
informational effect of feedback: additional PPO alone is insufficient, whereas
sufficiently strong task-relevant feedback moves the policy into a successful
behavioral region that subsequent sparse-reward learning can preserve.

\paragraph{Reproducibility.}
The experiment contains 40 seed--protocol units. Per-seed checkpoints,
evaluation trajectories, subgoal rates, relaxation summaries, and run metadata
are stored in SQLite and exported as machine-readable tables. All continuation
branches use the same environment, sparse reward, evaluation stream, and
training budget.

\subsection{Information-Driven Escape in Bayesian Optimization}
\label{sec:bo-escape}

We use Bayesian optimization (BO) to separate the informational effect of
feedback from the effect of reopening the search space. For each task and seed,
Phase~1 restricts expected-improvement BO to a local trust region centered on a
suboptimal minimum, producing a controlled locally saturated checkpoint. In
Phase~2, the constraint is removed for \emph{all} branches, including the
no-feedback baseline, and each branch receives 12 real BO queries over the full
domain.

We evaluate a controlled two-basin objective, Styblinski--Tang 2D, and
Hartmann--6 using 30 paired evaluation seeds, disjoint from the five calibration
seeds. We compare \textsc{Baseline}, \textsc{Generic}, \textsc{Targeted},
\textsc{Noisy-Targeted}, and \textsc{Mismatched} feedback, with information
amount $m\in\{1,2,4,8,16\}$. Injected observations update the GP posterior but
do not count as objective queries or realized improvements. They are excluded
from query de-duplication, allowing BO to actively query and verify a
feedback-indicated hypothesis.

An active escape occurs only when a real Phase~2 query enters the pre-specified
global basin:
\begin{equation}
    A_{s,c,m}
    =
    \mathbb{I}\!\left[
      \exists t:
      x^{\mathrm{query}}_{s,c,m,t}
      \in \mathcal{B}_{\mathrm{global}}
    \right].
\end{equation}
Let $R_{s,c,m}$ be the best post-intervention query regret. We define
\begin{equation}
    \Delta Q_{s,c,m}
    =
    R_{s,\mathrm{base},0}-R_{s,c,m},
    \qquad
    E^{\mathrm{eff}}_{s,c,m}
    =
    A_{s,c,m}\max(\Delta Q_{s,c,m},0).
\end{equation}
Thus, posterior displacement without active verification and positive quality
improvement does not count as effective escape.

\begin{figure}[t]
\centering
\includegraphics[width=\linewidth]{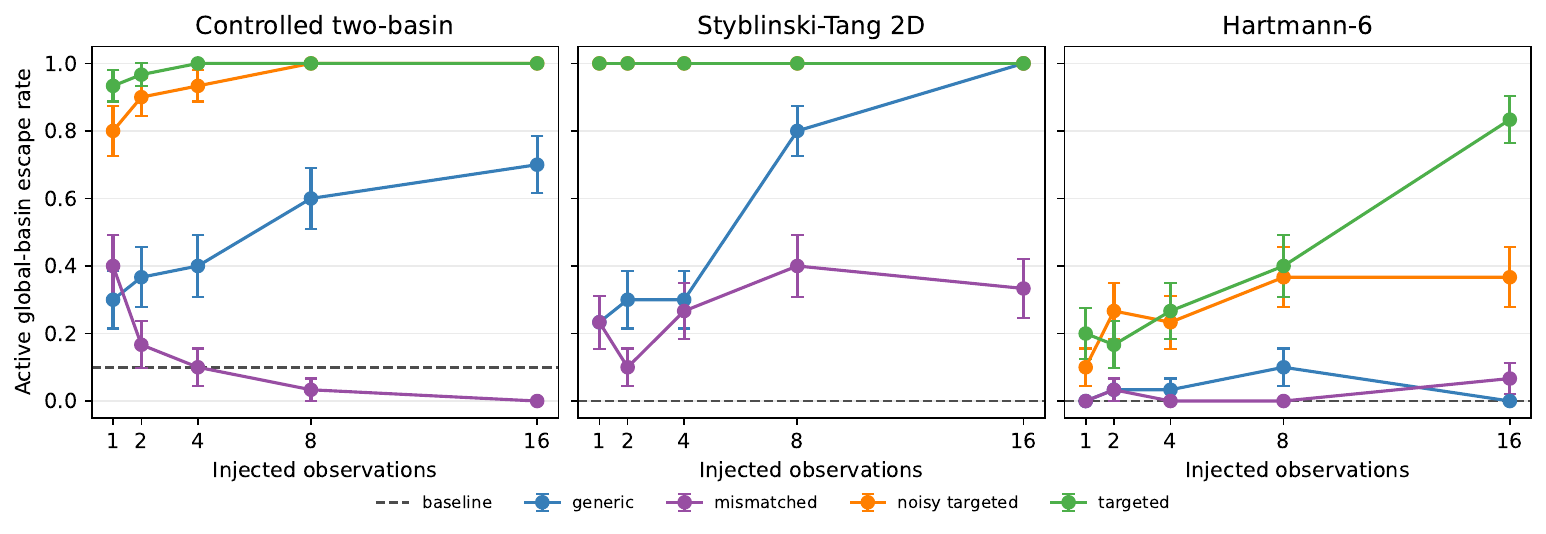}
\caption{
\textbf{Task-specific feedback-strength thresholds in Bayesian optimization.}
Each panel reports the active global-basin escape rate as a function of the
number of injected observations. Markers and whiskers denote mean \(\pm\) SEM
over 30 paired evaluation seeds; dashed lines indicate the task-specific
no-feedback baseline after reopening the search domain. Targeted feedback
induces active verification with fewer observations on the two-dimensional
landscapes, whereas Hartmann--6 requires substantially more information.
Results are shown separately by task to avoid pooling heterogeneous objective
geometries.
}
\label{fig:bo-escape}
\end{figure}

\begin{table}[t]
    \centering
    \caption{Active escape rates over 30 held-out seeds. T, G, and M denote
    targeted, generic, and mismatched feedback; subscripts indicate $m$.}
    \label{tab:bo-active-escape}
    \small
    \begin{tabular}{lrrrrrr}
        \toprule
        Task & Baseline & T$_1$ & T$_{16}$ & G$_1$ & G$_{16}$ & M$_{16}$ \\
        \midrule
        Controlled two-basin & .100 & .933 & 1.000 & .300 & .700 & .000 \\
        Styblinski--Tang 2D  & .000 & 1.000 & 1.000 & .233 & 1.000 & .333 \\
        Hartmann--6          & .000 & .200 & .833 & .000 & .000 & .067 \\
        \bottomrule
    \end{tabular}
\end{table}

Reopening the domain alone is generally insufficient: baseline escape is zero
on Styblinski--Tang and Hartmann--6 and only 0.10 on the controlled task. A
single targeted observation raises escape to 0.933 on the controlled task and
1.0 on Styblinski--Tang. Hartmann--6 exhibits a higher information requirement:
targeted escape rises from 0.20 at $m=1$ to 0.833 at $m=16$, while generic
feedback remains ineffective.

Large amounts of generic information can nevertheless induce escape on simpler
landscapes, reaching 0.70 on the controlled task and 1.0 on Styblinski--Tang at
$m=16$. Thus, targeted information is not strictly necessary. Instead,
alignment shifts the information--escape frontier: task-aligned feedback
requires less information to induce active verification and quality
improvement, while the required information level remains task dependent.

\section{Discussion}
\label{sec:discussion}

\paragraph{Scope and positioning.}
The framework gives sufficient conditions and operational diagnostics for
saturation and escape, not a full
classification of all possible closed-loop dynamics. It does not characterize
oscillation, divergence, or adversarial feedback, and Section~\ref{sec:escape}
analyzes a single external perturbation rather than a sequence of interacting
perturbations. The information-theoretic bound also assumes a bounded
representation space only when converting expected metric displacement into an
escape-probability bound. The direct event-level KL threshold does not require
bounded $\mathcal X$. Structural identifiability is relative to the declared
probe distribution $\mu$: no black-box experiment can distinguish parameter
values that induce the same transition law on all probed states.

\paragraph{Extensions.}
The framework is instantiated in three distinct settings: LLM code repair,
reinforcement learning, and Bayesian optimization. Future extensions include
active learning, curriculum learning, and multi-agent feedback loops.

\paragraph{Empirical status of the structural and information quantities.}
The current experiments measure escape outcomes and feedback budgets, but do
not yet estimate $\delta_{\mu,\nu}$ or the baseline-relative KL in
Equation~\eqref{eq:kl-escape-threshold}. Accordingly, they support the
saturation--escape phenomenology but should not be read as direct validation
of a governing-structure change or of the numerical information-divergence
threshold. A direct test would pre-register probe states, resample transitions
before and after intervention to estimate
Equation~\eqref{eq:structural-discrepancy}, and estimate escape probabilities
or density ratios relative to matched no-intervention continuations.

\paragraph{Conclusion.}
We introduced an operational framework for closed-loop knowledge dynamics
based on representations, parameterized transition kernels, and a governing
parameter identifiable up to observational equivalence. Classical Lyapunov
drift conditions characterize fixed-parameter saturation, while pre-registered
kernel discrepancies distinguish structural change from transient state
movement. Escape is evaluated separately as a quality-aligned probability
change relative to a matched no-intervention law, for which baseline-relative
KL---rather than mutual information alone---provides the relevant information
cost. The resulting contribution is a common measurement and intervention
language for heterogeneous closed-loop systems, supported by cross-domain
experiments and designed for direct empirical refinement.

\bibliographystyle{plainnat}
\bibliography{reference}


\appendix
\section{Deferred Proofs}
\label{app:proofs}

\paragraph{Proof of Proposition~\ref{prop:fix_point}.}
Taking expectation in the assumed drift inequality gives
$\mathbb E[d(x_{t+1},x^\star)] \le \rho \mathbb E[d(x_t,x^\star)] + \sigma$.
Unrolling the recursion yields
\[
\mathbb E[d(x_t,x^\star)] \le \rho^t d(x_0,x^\star) + \sum_{s=0}^{t-1}\rho^s\sigma
= \rho^t d(x_0,x^\star) + \frac{\sigma(1-\rho^t)}{1-\rho}.
\]
For the step size, by the triangle inequality,
$d(x_t,x_{t+1}) \le d(x_t,x^\star)+d(x_{t+1},x^\star)$.
Taking expectations and applying the previous bound to both terms gives
\[
\Delta_t \le \mathbb E[d(x_t,x^\star)] + \mathbb E[d(x_{t+1},x^\star)]
\le (1+\rho)\rho^t d(x_0,x^\star) + \frac{2\sigma}{1-\rho}.
\]

\paragraph{Proof of Proposition~\ref{prop:escape}.}
Since $x_{t_0}\in B(x^\ast_{\theta},c)$, the triangle inequality gives
$d(x_{t_0}, x^\ast_{\theta'}) \le c + \Delta$. One step of the perturbed
contraction then gives
$\mathbb{E}[d(x_{t_0+1}, x^\ast_{\theta'})] \le \rho'(c+\Delta) + \sigma_{\mathrm{ext}}$.
Applying the triangle inequality once more,
\[
\mathbb{E}[d(x_{t_0+1}, x^\ast_{\theta})]
\ge \Delta - \mathbb{E}[d(x_{t_0+1}, x^\ast_{\theta'})]
\ge \Delta - \rho'(c+\Delta) - \sigma_{\mathrm{ext}}.
\]
The right-hand side exceeds $c$ exactly when $\Delta > \Delta^\ast$.

\begin{remark}[Local linearization]
\label{rem:jacobian}
When $\mathcal{X}$ is a vector space or smooth manifold and
$\mathcal{T}_{\omega}$ is differentiable near a fixed point $x^\star$, local
stability can be checked through the Jacobian
\[
J_\star = D_x\mathcal{T}_{\omega}(x^\star).
\]
Writing $\delta x_{t+1}\approx J_\star\delta x_t$, the quadratic Lyapunov
function $V(x)=\|x-x^\star\|^2$ gives local contraction whenever the spectral
radius satisfies $\rho(J_\star)<1$. The eigenvectors of $J_\star$ identify
directions of fast and slow decay. This provides a differentiable analogue of
the metric contraction condition used below for text-space self-refinement.
\end{remark}

\subsection{Additional Validation A: Cross-System Saturation Signatures}

\begin{figure}
\centering
    \includegraphics[width=0.85\linewidth]{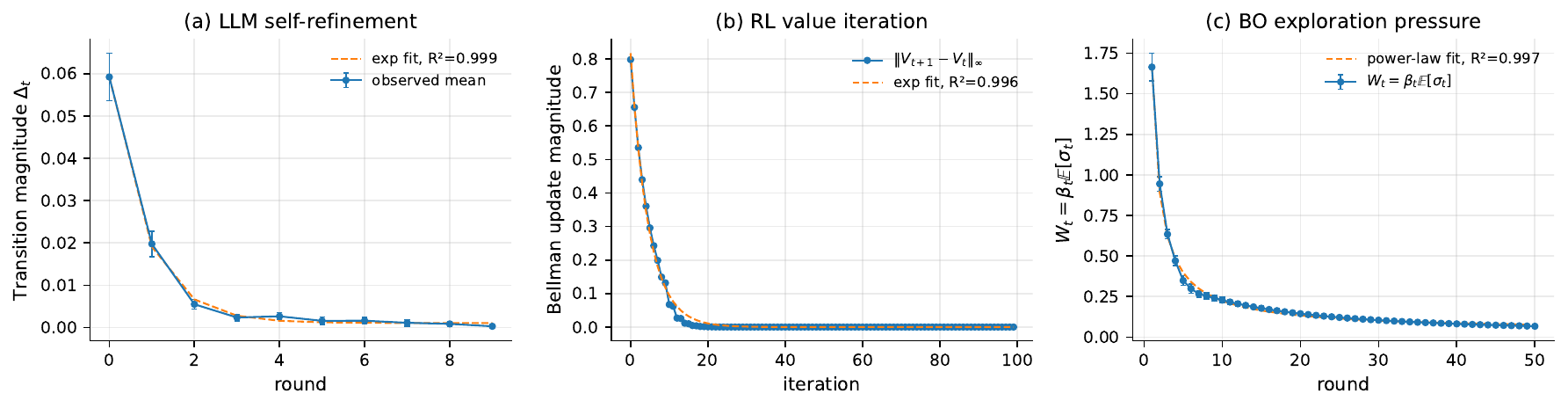}
\caption{
Saturation signatures across three closed-loop systems.
(a) LLM self-refinement: transition magnitude decays toward a residual floor.
(b) RL value iteration: Bellman update magnitude decays geometrically.
(c) Bayesian optimization: exploration pressure
$W_t=\beta_t\mathbb{E}[\sigma_t]$ decays as posterior uncertainty is resolved.
Together, these examples show saturation across text refinement,
value-function iteration, and belief-guided search.
}
\label{fig:saturation-across-systems}
\end{figure}

We first test whether the saturation signature predicted by Theorem~\ref{thm:general-stability} appears across qualitatively different closed-loop systems. We consider three cases: recursive LLM self-refinement, tabular RL value iteration, and belief-guided Bayesian optimization. These systems differ in state space and feedback mechanism, but each repeatedly updates an internal knowledge representation under fixed governing dynamics. For LLM self-refinement, the state $x_t$ is a draft and the transition magnitude is measured as $\Delta_t=d(x_t,x_{t+1})$. For RL, the state is the value function and we track the Bellman update magnitude $\|V_{t+1}-V_t\|_\infty$. For Bayesian optimization, the state is a Gaussian process posterior and we track the exploration pressure $W_t=\beta_t\mathbb{E}[\sigma_t]$. Figure~\ref{fig:saturation-across-systems} shows that all three systems exhibit a saturation pattern. LLM self-refinement follows an exponential-plus-floor curve, RL value iteration shows rapid geometric decay, and Bayesian optimization exhibits power-law attenuation of exploration pressure. These results support the claim that saturation is a general closed-loop phenomenon rather than an artifact of a single model or task.

\subsection{Additional Validation B: Escape in Recursive Abstract Refinement}

We next test whether external feedback can move a saturated self-refinement
trajectory beyond its internal continuation baseline. After recursive
self-refinement reaches a low-change regime at round $t_0$, we compare four
conditions: baseline continuation, generic feedback, targeted feedback, and
mismatched feedback. We define escape magnitude as
\[
E_{\mathrm{shift}}
=
d\!\left(
x_{t_0+1}^{\mathrm{condition}},
x_{t_0+1}^{\mathrm{baseline}}
\right),
\]
where $x_{t_0+1}^{\mathrm{baseline}}$ is the output produced by continuing
internal self-refinement without external feedback. To distinguish mere
displacement from useful escape, we also compute
\[
E_{\mathrm{useful}}
=
E_{\mathrm{shift}}\max(\Delta Q,0),
\]
where $\Delta Q$ is the quality gain relative to the baseline continuation.

Figure~\ref{fig:escape} shows that targeted feedback produces the
largest escape magnitude, approximately $0.54$, compared with approximately
$0.26$ for generic feedback and $0.43$ for mismatched feedback. The mismatched
condition demonstrates that irrelevant feedback can still induce a large state
displacement. However, useful escape is concentrated almost entirely in the
targeted-feedback condition, with mean useful escape approximately $0.09$,
whereas generic and mismatched feedback produce near-zero useful escape. These
results support the prediction that successful escape requires external
information that is both strong enough to perturb the saturated trajectory and
relevant enough to improve it.

\begin{figure}
    \centering
    \includegraphics[width=0.75\linewidth]{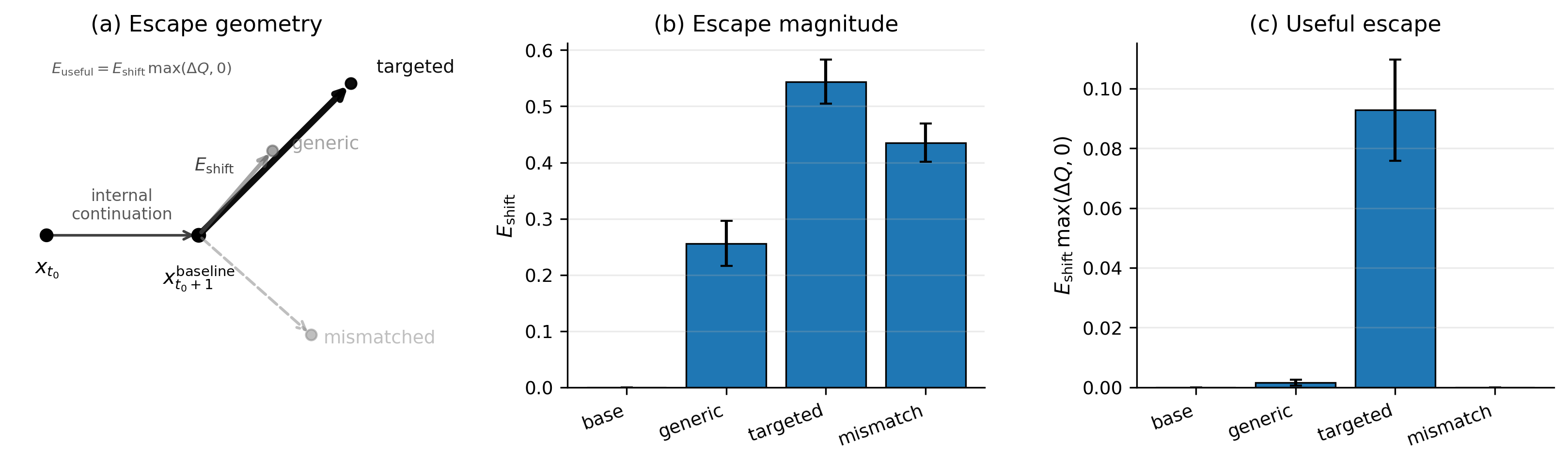}
    \caption{
External feedback induces condition-dependent escape from saturated
self-refinement trajectories.
(a) Escape is measured relative to the internal continuation baseline
$x_{t_0+1}^{\mathrm{baseline}}$. Generic and targeted feedback move the
trajectory in the same useful direction, while mismatched feedback moves it in
a different direction.
(b) Targeted feedback produces the largest escape magnitude
$E_{\mathrm{shift}}=
d(x_{t_0+1}^{\mathrm{condition}},
x_{t_0+1}^{\mathrm{baseline}})$.
(c) Useful escape, measured as
$E_{\mathrm{shift}}\max(\Delta Q,0)$, is concentrated in the targeted-feedback
condition, showing that successful escape requires feedback that is both
strong enough to move the trajectory and relevant enough to improve it.
}
    \label{fig:escape}
\end{figure}

\subsection{Additional Validation C: Post-Perturbation Relaxation}

We finally test whether successful external perturbation induces a renewed
relaxation phase after saturation. Under fixed governing dynamics, recursive
self-refinement should approach a low-change regime. If targeted external
feedback shifts the effective transition regime, then the transition magnitude
should exhibit a spike at the perturbation round followed by a second decay
phase.

For each item, we run recursive self-refinement until round $t_0$, where the
transition magnitude has reached a small residual floor. We then compare two
continuations: a baseline trajectory that continues internal self-refinement
without external feedback, and a targeted-perturbation trajectory that receives
external feedback at round $t_0$ before continuing internal refinement. We track
\[
\Delta_t=d(x_t,x_{t-1})
\]
over the full trajectory. To quantify relaxation before and after the
perturbation, we fit exponential-plus-floor curves:
\[
\Delta_t^{\mathrm{pre}}
\approx
A_1 e^{-k_1t}+c_1,
\qquad
\Delta_t^{\mathrm{post}}
\approx
A_2 e^{-k_2(t-t_0-1)}+c_2.
\]

Table~\ref{tab:double-saturation-results} shows the predicted double-saturation
pattern. The pre-perturbation trajectory decays toward a small floor, indicating
saturation under internal feedback. The targeted perturbation produces a spike
in transition magnitude above the saturated floor, while the baseline
continuation remains near the floor. After the perturbation, the transition
magnitude decays again, producing a second relaxation phase. This provides an
observable candidate signature for renewed relaxation: the external input
moves the system away from its internal continuation, and the subsequent
internal loop relaxes toward a new locally stable regime. Under
Equation~\eqref{eq:structural-discrepancy}, however, a spike and second decay
alone do not establish a governing-structure shift; that claim requires a
reproducible post-intervention kernel discrepancy on pre-specified probes.

\begin{table}[t]
\centering
\small
\begin{tabular}{lcccc}
\toprule
Condition & Spike magnitude $\uparrow$ & Spike rate $\uparrow$ & Post-fit $R^2$ & Final $\Delta_T$ $\downarrow$ \\
\midrule
Baseline continuation & 0.004 $\pm$ 0.002 & 0.07 & --    & 0.006 $\pm$ 0.001 \\
Generic feedback      & 0.026 $\pm$ 0.006 & 0.38 & 0.841 & 0.010 $\pm$ 0.003 \\
Targeted perturbation & 0.071 $\pm$ 0.009 & 0.93 & 0.962 & 0.008 $\pm$ 0.002 \\
Mismatched feedback   & 0.049 $\pm$ 0.008 & 0.71 & 0.624 & 0.021 $\pm$ 0.005 \\
\bottomrule
\end{tabular}
\caption{
Summary of double-saturation signatures after external perturbation.
Targeted perturbation produces the largest spike rate and the clearest
post-perturbation relaxation fit, while mismatched feedback induces displacement
without a comparably stable second relaxation phase.
}
\label{tab:double-saturation-results}
\end{table}

Together with Experiment~2, this result distinguishes useful escape from mere
perturbation. Experiment~2 shows that targeted feedback produces the largest
useful escape magnitude, while Experiment~3 shows that such perturbations can
initiate a renewed relaxation phase rather than simply increasing noise.

\end{document}